\documentclass[11pt,a4paper]{article}
\usepackage{authblk}
\usepackage{graphicx}
\usepackage{amsmath}
\usepackage{amssymb}
\usepackage{algorithm}
\usepackage{algorithmic}
\usepackage{amsthm}
\usepackage{caption}
\usepackage{epstopdf}
\usepackage{url}
\usepackage{subfig}
\usepackage{natbib}
\usepackage{cases}
\usepackage{bm}
\usepackage{booktabs}
\usepackage{bbm}
\newcommand{\tabincell}[2]{\begin{tabular}{@{}#1@{}}#2\end{tabular}}
\setlength{\textwidth}{\paperwidth}
\addtolength{\textwidth}{-6cm}
\setlength{\textheight}{\paperheight}
\addtolength{\textheight}{-4cm}
\addtolength{\textheight}{-1.1\headheight}
\addtolength{\textheight}{-\headsep}
\addtolength{\textheight}{-\footskip}
\setlength{\oddsidemargin}{0.5cm}
\setlength{\evensidemargin}{0.5cm}
\newtheorem{theorem}{Theorem}
\newtheorem{lemma}{Lemma}

\theoremstyle{definition}

\title{ Model Embedding Model-Based Reinforcement Learning  }
\date{}
\author[1]{Xiaoyu Tan \footnote{Equal contribution with randomized order.}}
\author[1]{Chao Qu$^{*}$}
\author[1]{Junwu Xiong}
\author[1]{James Zhang}

\affil[1]{Ant Financial Services Group}

\begin{document}

\maketitle

\begin{abstract}
  Model-based reinforcement learning (MBRL) has shown its advantages in sample-efficiency over model-free reinforcement learning (MFRL). Despite the impressive results it achieves, it still faces a trade-off between the ease of data generation and model bias. In this paper, we propose a simple and elegant model-embedding model-based reinforcement learning (MEMB) algorithm in the framework of the probabilistic reinforcement learning. To balance the sample-efficiency and model bias, we exploit both real and imaginary data in the training. In particular, we embed the model in the policy update and learn $Q$ and $V$ functions from the real data set. We provide the theoretical analysis of MEMB with the Lipschitz continuity assumption on the model and policy. At last, we evaluate MEMB on several benchmarks and demonstrate our algorithm can achieve state-of-the-art performance.
\end{abstract}
\section{Introduction}

Reinforcement learning can be generally  classified into two categories:  model-free reinforcement learning (MFRL) and model-based reinforcement learning (MBRL). There is a surge of interest in MBRL recently due to its higher sample-efficiency comparing with MFRL \citep{kurutach2018model,heess2015learning,asadi2018lipschitz}. Despite its success,  MBRL still faces a challenging problem, i.e., the model-bias, where the imperfect dynamics model would degrade the performance of the algorithm \citep{kurutach2018model}. Unfortunately, such things always happen when the environment is sufficiently complex.  There are a few efforts to mitigate such issues by combining model-based and model-free approaches. \citet{heess2015learning} compute the value gradient along real system trajectories instead of planned ones to avoid the compounded error. \citet{kalweit2017uncertainty} mix the real data and imaginary data from the model and then train $Q$ function.  An ensemble of neural networks can be applied to model the environment dynamics, which effectively reduces the error of the model \citep{kurutach2018model,clavera2018model,chua2018deep}.

Indeed, how to exploit the real and imaginary data is a key question in model-based reinforcement learning. Recent model-based algorithms applying Dyna-style updates have demonstrated promising results  \citep{sutton1990integrated, kurutach2018model,luo2018algorithmic}. They collect real data using the current policy to train the dynamics model. Then the policy is improved using state-of-the-art \emph{model-free} reinforcement learning algorithms with imagined data generated by the learned model.  Our \emph{argument} is that why not \emph{directly embed} the model into the policy improvement?  To this end, we derive a reinforcement learning algorithm called model-embedding model-based reinforcement learning (MEMB) in the framework of the probabilistic reinforcement learning \citep{levine2018reinforcement}.

We provide the theoretical result on the error of the long term return in MEMB, which is caused by the model bias and policy distribution shift given the Lipschitz continuity condition of the model and policy. In addition, our analysis takes consideration of the length of the rollout step, which helps us to design the algorithm. In MEMB, the dynamics model and reward model are trained with the real data set collected from the environment.  Then we simply train $Q$ and $V$ function using the real data set with the update rule derived from the maximum entropy principle (several other ways to include the imaginary data can also be applied, see discussions in Section \ref{section:MEMB}). In the policy improvement step, the stochastic actor samples an action with the real state as the input, and then the state switches from $s$ to $s'$ according to the learned dynamics model.

We link the learned dynamics model, reward model, and  policy to compute an analytic policy gradient by the back-propagation. Comparing with the likelihood-ratio estimator usually used in the MFRL method, such value gradient method would reduce the variance of the policy gradient \citep{heess2015learning}. The other \emph{merit} of MEMB is its computational efficiency. Several state-of-the-art MBRL algorithms generate hundreds of thousands imaginary data from the model and a few real samples \citep{luo2018algorithmic,janner2019trust}. Then the \emph{huge} imaginary data set feeds into MFRL algorithms, which may be sample-efficient in terms of real samples  but not  computational-friendly. On the contrary,  our algorithm embeds the model in the policy update. Thus we can implement it efficiently by computing policy gradient several times in each iteration (see our algorithm \ref{alg:MEMB}) and do not need to do the calculation on the huge imaginary data set. 


 Notice SVG \citep{heess2015learning} also embeds the model to compute the policy gradient. However, there are \emph{several key differences} between MEMB and SVG. 
 \begin{itemize}
     \item To alleviate the issue of the compounded error, SVG proposes a  conservative algorithm where just real data is used to evaluate policy gradients and the imaginary data is wasted. However, our theorem shows that the  imaginary data from the short rollout from the learned model can be trusted. In our work, the policy is trained with the model and imaginary dataset $m$ times in each iteration of the algorithm. In the ablation study (appendix \ref{section:ablation_study}), we demonstrate such difference leads to a \emph{large} gap in the performance.
     \item We provide a theoretical guarantee of the algorithm, which is not included in SVG.  
     \item  We derive our algorithm in the framework of the probabilistic reinforcement learning. The entropy term would encourage the exploration, prevent the early convergence to the sub-optimal policies, and show state-of-the-art performance in MFRL \citep{haarnoja2018soft}. 
 \end{itemize}
  In addition, MEMB avoids the importance sampling in the off-policy setting by sampling the action from $\pi$ and transition from the learned model, which further reduces the variance of the gradient estimation.
  
\textbf{Contributions:} We derive an elegant, \emph{sample-efficient}, and \emph{computational-friendly} \footnote{We can finish one trial of the experiment around one or two hours on a laptop.} Dyna-style MBRL algorithm in the framework of the probabilistic reinforcement learning in a principled way. Different from the traditional MBRL algorithm, we directly \emph{embed} the model into the policy improvement, which could reduce the variance in the gradient estimation and avoid the computation on the huge imaginary data set. In addition, since the algorithm is \emph{off-policy}, it is sample-efficient. At last, we provide  theoretical results of our algorithm  on the long term return considering the model bias and policy distribution shift.  We test our algorithm on several benchmark tasks in Mujoco simulation environment \citep{todorov2012mujoco} and demonstrate that our algorithm can achieve state-of-the-art performance. We provide our code anonymously for the reproducibility \footnote{Code is submitted at https://github.com/MEMB-anonymous1/MEMB }. 

\textbf{Related work:} There are a plethora of works on MBRL. They can be classified into several categories depending on the way to utilize the model, to search the optimal policy or the function approximator of the dynamics model. We leave the comprehensive discussion on the related work in appendix \ref{appendix:related_work}.

\section{Preliminaries}
In this section, we first present some backgrounds on the Markov decision process. Then we introduce the knowledge on the probabilistic reinforcement learning with entropy regularization \citep{ziebart2008maximum,levine2018reinforcement} and stochastic value gradient \citep{heess2015learning} since parts of them are the building blocks of our algorithm.  
\subsection{MDP}

 Markov Decision Process (MDP) can be described by  a 5-tuple ($\mathcal{S}, \mathcal{A}, r, p, \gamma$): $\mathcal{S}$ is the  state space, $\mathcal{A}$ is the action space, $p$ is the transition probability, $r$ is the expected reward, and $\gamma\in [0,1)$ is the discount factor. That is for $s\in \mathcal{S}$ and $a\in \mathcal{A}$, $r(s,a)$ is the expected reward, $p(s'|s,a)$ is the probability to reach the state $s'$. A policy is used to select actions in the MDP. In general, the policy is stochastic and denoted by $\pi$, where $\pi(a_t|s_t)$ is the conditional probability density at $a_t$ associated with the policy. The state value evaluated on policy $\pi$ could be represented by $V^\pi(s)= \mathbb{E}_{\pi} [\sum_{t=0}^{\infty} \gamma^t r(s_t,a_t)| s_0=s]$ on immediate reward return $r$ with discount factor $\gamma\in (0,1)$ along the horizon $t$. When the entropy of the policy is incorporated in the probabilistic reinforcement learning \citep{ziebart2008maximum}, we could redefine the reward function $r(s,a)\leftarrow r(s,a)-\log\pi(a|s)$. When the model of the environment is learned from the data, we use $\hat{p}(s'|s,a)$ to denote the learned dynamic model, and $\hat{r}(s,a)$ as the learned reward model.
 
 We denote the \emph{true long term return} as $\eta(\pi):= \mathbb{E}_{\pi} \sum_{t=0}^{\infty} \gamma^t r(s_t,a_t)$, where the expectation corresponds to the policy, true transition and true reward.  In the model based reinforcement learning, we denote the \emph{model long term return} as $\hat{\eta}(\pi):=\hat{\mathbb{E}}_\pi \sum_{t=0}^{\infty}\gamma^t \hat{r}(s_t,a_t)$, where $\hat{\mathbb{E}}$ means the expectation over the policy, learned model $\hat{p}$ and $\hat{r}$. 
 
\subsection{Probabilistic Reinforcement Learning}\label{section:PRL}

\citet{levine2018reinforcement} formulate reinforcement learning as a probabilistic  inference problem. The trajectory $\tau$ up to time step $T$ is defined as $$\tau=((s_0,a_0),(s_1,a_1),...,(s_T,a_T)).$$
 The probability of the trajectory with the optimal policy is defined as  $$\rho = [p(s_0)\prod_{t=0}^{T}p(s_{t+1}|s_t,a_t)]\exp\big(\sum_{t=0}^{T} r(s_t,a_t)\big).$$The probability of the trajectory induced by the policy $\pi(a|s)$ is $$ \tilde{\rho}= p(s_0)\prod_{t=0}^{T}p(s_{t+1}|s_t,a_t)\pi(a_t|s_t).$$ The objective is to minimize the KL divergence $ KL(\tilde{\rho},\rho)$, which leads to the  entropy regularized reinforcement learning $\max_{\pi} \sum_{t=0}^T \mathbb{E}_{(s_t,a_t)\sim\rho_\pi}[r(s_t,a_t)+\alpha\mathcal{H}(\pi(\cdot|s_t))],$
where $\mathcal{H}(\pi(\cdot|s_t))$ is an entropy term scaled by  $\alpha$ \citep{ziebart2008maximum}. 
The optimal policy can be obtained by the following soft-Q update \citep{foxtaming}.
$$ Q(s_t,a_t) \longleftarrow r(s_t,a_t) + \gamma \mathbb{E}_{s_{t+1}\sim p}[V(s_{t+1})],V(s_t)\leftarrow \alpha \log(\int_{\mathcal{A}}  \exp(\frac{1}{\alpha} Q(s_t,a_t))da_t ).$$ 

Above iterations define the soft $Q$ operator, which is a contraction. The optimal policy $\pi^{*}(a|s)$ can be recovered by 
 $\pi^\star(a_t|s_t)= \frac{\exp(\frac{1}{\alpha}Q^*(s_t,a_t))}{\int_{\mathcal{A}}\exp(\frac{1}{\alpha}Q^*(s_t,a_t))da_t}$,
where $Q^*$ is the fixed point of the soft-Q update. We refer readers to the work  \citep{ziebart2008maximum,haarnoja2017reinforcement} for more discussions. In soft actor-critic \citep{haarnoja2018soft},  the optimal policy $\pi^*(a_t|s_t)$ is approximated by a neural network $\pi_{\theta}(a_t|s_t)$,  which is obtained by solving the following optimization problem $$\max_{\pi_\theta(a_t|s_t)} \mathbb{E}_{s_t\sim p(s_t)}\mathbb{E}_{a_t\sim \pi_\theta(a_t|s_t)} [Q(s_t, a_t)-\alpha\log\pi_\theta(a_t|s_t) )].$$ 

\subsection{Stochastic Value Gradient}
Stochastic value gradient method is a model-based algorithm which is designed to avoid the compounded model errors by only using the real-world observation and gradient information from the model \citep{heess2015learning}. The algorithm directly substitutes the dynamics model and reward model in the Bellman equation and calculates the gradient. To perform the backpropagation in the stochastic Bellman equation, the re-parameterization trick is applied to evaluate the gradient on real-world data. In SVG(1), the stochastic policy $\pi(a|s;\theta)$ with parameter $\theta$ could be optimized by the policy gradient in the following way

\begin{flalign}\label{equ:svg_gradient}
  \frac{\partial V(s)}{\partial \theta}\approx \mathbb{E}_{\eta, \zeta}[\frac{\partial \hat{r}(s,a)}{\partial a}
\frac{\partial \pi(a|s)}{\partial \theta}+\gamma(\frac{\partial V^\prime(s^\prime)}{\partial s^\prime}\frac{\partial f(s,a)}{\partial a}\frac{\partial \pi(a|s)}{\partial \theta})],
\end{flalign}
where  $\eta$ and $\zeta$ are the policy and model re-parameterization noise which could be directly sampled from a prior distribution or inferred from a generative model $g(\eta,\zeta|s,a,s^\prime)$. The $f(s,a)$ and $\hat{r}(s,a)$ are dynamics model and reward model respectively. In the off-policy update, SVG includes the important weight $ w=\frac{\pi(a_{k}|s_{k},\theta_t)}{ \pi(a_{k}|s_{k},\theta_k)}$, where $\theta_t$ represent the parameter of the current policy and $k$ is the index of the data from the replay buffer.

\textbf{Notions:} Given two metric space $(X, d_X)$ and $(Y, d_Y) $, we say a function $f$ is $L$ Lipschitz if $d_Y(f(x_1),f(x_2))\leq L d_X(x_1,x_2), \forall x_1,x_2 \in X$. Give a meritc space $(M,d)$ and the set $\mathbb{P}(M)$ of probability measures on $M$, the Wasserstein metric between two probability distributions $\mu_1$ and $\mu_2$ in $\mathbb{P}(M)$ is defined as $W(\mu_1,\mu_2):=\inf_{j\in \Sigma} \int\int p(x,y)d(x,y)dxdy,$
where $\Sigma$ denotes the collection of all joint distributions $p$ with marginal $\mu_1$ and $\mu_2$.

\section{MEMB}\label{section:MEMB}
In this section, we introduce our model-embedding model-based reinforcement learning algorithm (MEMB). Particularly, we optimize the following model long term return with the entropy regularization.
\begin{equation}\label{mbsac:objevtive}
\max_\pi\sum_{t=0}^T \hat{\mathbb{E}} [\hat{r}(s_t,a_t)+\mathcal{H}(\pi(a_t|s_t))],
\end{equation}
where we omit the regularizer parameter $\alpha$ of the entropy term in the following discussion to ease the exposition. Remind that optimizing the entropy regularzied reinforcement learning is equivalent to minimize the KL divergence between the distribution of $\rho$ and $\tilde{\rho}$ in Section \ref{section:PRL}. Now we replace the true model by the learned model $\hat{p}(s_{t_{t+1}}|s_{t},a_{t})$ and $\hat{r}(s,a)$.  Therefore we have $\tilde{\rho}_\pi = p(s_0)\prod_{t=0}^{T}\hat{p}(s_{t+1}|s_t,a_t)\pi(a_t|s_t)$ and $\rho_\pi = [p(s_0)\prod_{t=0}^{T}\hat{p}(s_{t+1}|s_t,a_t)]\sum_{t=0}^T \exp(\hat{r}(s_t,a_t))$. We then optimize the KL divergence $ KL(\tilde{\rho}, \rho) $, w.r.t to $\pi(a_t|s_t)$. Using the backward view as that in \citep{levine2018reinforcement}, we have the optimal policy. We defer the derivation to the appdendix \ref{appendix:derivation}.
      $$\pi^*(a_t|s_t)=\frac{\exp(Q(s_t,a_t))}{ \int_a \exp(Q(s_t,a_t) da_t},$$
\begin{equation}\label{equ:soft_bellman_mb}
\text{where}~ Q(s_t,a_t) = \hat{r}(s_t,a_t) + \gamma \mathbb{E}_{s_{t+1}\sim \hat{p}}[V(s_{t+1})],V(s_t)=\mathbb{E}_{\pi(a_t|s_t)}[Q(s_t,a_t)-\log\pi(a_t|s_t)].
\end{equation}

In the policy improvement step, the optimal policy $\pi^* $ can be approximated by a parametric function $\pi_{\theta} (a_t|s_t)$. In the \emph{MFRL}, this can be obtained by solving $$\max_{\pi_\theta(a_t|s_t)} \mathbb{E}_{s_t\sim p(s_t)}\mathbb{E}_{a_t\sim \pi_\theta(a_t|s_t)} [Q(s_t, a_t)-\log\pi_\theta(a_t|s_t) )]$$ \citep{levine2018reinforcement,haarnoja2018soft}. A straightforward way is to optimize $Q$, $V$ and $\pi$ using the imaginary data from the rollout, which \emph{reduces} to \citet{luo2018algorithmic,janner2019trust} and many others. However such way used in the MFRL \emph{cannot} leverage the model information. We leave the our derivation and discussion on the policy improvement in  Section \ref{section:policy_learning}.

\subsection{Model Learning}\label{section:model_learning}
The transition dynamics and rewards could be modeled by non-linear function approximations as two independent regression tasks which have the same input but different output. Particularly, we train two independent deep neural networks with parameter $\omega$ and $\varphi$ to represent the dynamics model $\hat{p}$ and reward model $\hat{r}$ respectively. In our analysis, we assume $\hat{p}$ is not far from $p$, i.e., $W(p(s'|s,a), \hat{p}(s'|s,a))\leq \epsilon_m$, which can be estimated in practice by cross validation.

To better represent the stochastic nature of the dynamic transitions and rewards, we implement re-parameterization trick on both $\hat{p}$ and $\hat{r}$ with input noises $\zeta_\omega$ and $\zeta_\varphi$ sampled from Gaussian distribution $\mathcal{N}(0,1)$. Thus we can denote dynamic model $s'=f(s,a,\zeta_{w})$ and reward as $\hat{r}(s,a,\zeta_{\varphi})$. In practice, we use neural networks to  generate mean: $\mu_\omega$, $\mu_\varphi$, and variance: $\sigma_\omega$, $\sigma_\varphi$ separately for the transition model and reward model, respectively. Then, we compute the result by $\mu_\omega+\sigma_\omega\zeta_\omega$ and $\mu_\varphi+\sigma_\varphi\zeta_\varphi$, respectively.

We optimize above two models by sampling the data $(s,a,s',r)$ from the (real data) replay buffer $\mathcal{D}$ and minimizing the mean square error: 
\begin{flalign}\label{equ:model_loss}
J(\omega) = \frac{1}{2}\mathbb{E}_{ \mathcal{D},\zeta_\omega}[(f(s,a,\zeta_{\omega})-s^\prime)^2],J(\varphi) = \frac{1}{2}\mathbb{E}_{ \mathcal{D},\zeta_\varphi}[(\hat{r}(s,a,\zeta_\varphi)-r)^2].
\end{flalign} 

\subsection{Value function learning}\label{section:value_function_learning}

Although in equation \eqref{equ:soft_bellman_mb}, the $Q$ function is computed w.r.t. the learned model, this way would cause the model bias in practice. In addition, in the policy update, this bias would result in an additional error of the policy gradient (since roughly speaking, the gradient of the policy is weighted by the Q function). To avoid this model error, we update $V$, $Q$ using equation \eqref{equ:soft_bellman_mb} with the real transition $(s, a, r, s') $ from the real data replay buffer. Particularly, we minimize the following error w.r.t to $V$ and $Q$.
\begin{flalign}\label{equ:mbsac_v}
    J(\psi)= \mathbb{E}_{s_t \sim \mathcal{D}}[\frac{1}{2}(V_\psi(s_t)-\mathbb{E}_{a_t\sim\pi}[Q_\phi(s_t,a_t)-\log\pi(a_t|s_t)])^2].
\end{flalign}
\begin{equation}\label{equ:mbsac_q_1}
    J(\phi)=\mathbb{E}_{(s_t,a_t)\sim\mathcal{D}}[\frac{1}{2}\big(Q_\phi(s_t,a_t)-r_t-\gamma V_{\psi}(s_{t+1}) \big)^2]. 
\end{equation}
 A straightforward way to incorporate the imaginary data is the value expansion  \citep{feinberg2018model}. However in our ablation study, we find that the training with real data gives the best result. Thus we just briefly introduce the value expansion here. If $Q$ function and $V$ function are parameterized by $\phi$ and $\psi$ respectively, they could be updated by minimizing the new objective function with the value expansion on imaginary rollout: $  J(\phi)=\frac{1}{H}\sum_{t=0}^{H-1}\big(Q_\phi(\hat{s}_t,\hat{a}_t)-(\sum_{k=t}^{H-1}\gamma^{k-t}\hat{r}_k+\gamma^{H-t}V_\psi(\hat{s}_H))\big)^2,$ where only the initial tuple $\tau_0=(\hat{s}_0,\hat{a}_0,\hat{r}_0,\hat{s}_1)$ is sampled from replay buffer $\mathcal{D}$ with real-world data, and later transitions are sampled from the imaginary rollout from the model.  $H$ here is the time step of value expansion using imaginary data. $\tau$ is the training tuple and $\tau_0$ is the initial training tuple. Note that when $H=1$, it reduces to the case where just real data is used. 
 
 \subsection{Policy Learning}\label{section:policy_learning}
Then we consider the policy improvement step, i.e., to calculate the optimal policy at each time step. One \emph{straightforward} way is to optimize the following problem $$\max_{\pi(a_t|s_t)} \mathbb{E}_{s_t\sim p(s_t)}\mathbb{E}_{a_t\sim \pi(a_t|s_t)} [Q(s_t, a_t)-\log\pi(a_t|s_t) )]$$ as that in MFRL but with the imaginary data set. This way reduces to the work \citep{luo2018algorithmic,chua2018deep,janner2019trust}. However, such way cannot leverage the learned dynamics model and reward model. To incorporate the model information, notice that $V(s_t)=\mathbb{E}_{a\sim \pi(a_t|s_t)}[Q(s_t,a_t)-\log \pi(a_t|s_t)]$, thus the policy improvement step is equal to 
\begin{equation}\label{equ:policy_update}
\max_{\pi(a_t|s_t)}\mathbb{E}_{s_t\sim p(s_t)}  V(s_t).
\end{equation}
In the following, we connect the dynamics model, reward model, and value function together by the soft Bellman equation. Recall we have re-parameterized the dynamics model $s'=f(s,a,\zeta_\omega)$ in  Section \ref{section:model_learning}. 
Now we re-parameterize the policy as  $a=\kappa(s,\eta;\theta)$  with noise variables $\eta\sim \rho (\eta) $. Now we can write the soft Bellman equation in the following way.
\begin{equation}\label{equ:bellman_repar}
    V(s)=\mathbb{E}_{\eta}[\mathbb{E}_{\zeta_\varphi}\hat{r}(s, \kappa(s,\eta;\theta),\zeta_\varphi)-\log\pi(a|s)
+\gamma\mathbb{E}_{\zeta_w}V'(f(s,\kappa(s,\eta;\theta),\zeta_\omega)) ]
\end{equation}
To optimize \eqref{equ:policy_update} and leverage  gradient information of the model, we sample $s$ from the real data replay buffer $D$ and take the gradient of $V(s)$ w.r.t. $\theta$ 
\begin{equation} \label{equ:mbsac_re_policy}
    \mathbb{E}_{s\sim D}\frac{\partial V(s)}{\partial \theta}=\mathbb{E}_{s\sim D, \eta, \zeta_w,\zeta_\varphi}[\frac{\partial \hat{r}}{\partial a}\frac{\partial \kappa}{\partial \theta}-\frac{1}{\pi}\frac{\partial \pi}{\partial \theta}+\gamma(\frac{\partial V^\prime(s^\prime)}{\partial s^\prime}\frac{\partial f}{\partial a}\frac{\partial \kappa}{\partial \theta})].
\end{equation}
For the Gaussian noise case, we can further simplify $\frac{\partial \pi}{\partial \theta}$ by plugging in the density function of the normal distribution. Clearly, we can unroll the Bellman equation with $k$ steps and obtain similar result but here we focus on $k=1$. The equation \eqref{equ:mbsac_re_policy} demonstrates an interesting connection between our algorithm and SVG. Notice that the transition from $(s,a)$ to $s'$ is sampled from the learned dynamics model $f$, while the SVG(1) just utilizes the real data. Thus in the algorithm we can update policy several times in each iteration to fully utilized the model rather than just use the real transition once.  Compared with the policy gradient step taken by SVG(1) algorithm \citep{heess2015learning}, equation \eqref{equ:mbsac_re_policy} includes one extra term $-(1/\pi)(\partial\pi/\partial\theta)$ to maximize the entropy of policy. We also drop the importance sampling weights by sampling from the current policy.

\subsection{MEMB algorithm}
We summarize our MEMB in Algorithm \ref{alg:MEMB}. At the beginning of each step, we train dynamics model $f$ and reward model $\hat{r}$ by minimizing the $\text{L}_2$ loss shown in \eqref{equ:model_loss}. Then the agent interacts with the environment and stores the data in the real data replay buffer $D$. Actor samples $s_k$ from $D$ and collects $s_{k+1}$ according to the dynamics model $f(s_k,a_k,\zeta_\omega)$. Such imaginary transition is stored in $D_{img}$. Then we train $Q$, $V$, and $\pi$ according to the update rule in Section \ref{section:MEMB}.  Similar to other value-based RL algorithms, our algorithm also utilizes two $Q$ functions to further reduce the overestimation error by training them simultaneously with the same data but only selecting the minimum target in value updates \citep{fujimoto2018addressing}. We use the target function for $V$ like that in deep Q-learning algorithm \citep{mnih2015human}, and update it with an exponential moving average. We train policy using the gradient in \eqref{equ:mbsac_re_policy}. Remark that our $s'$ is sampled from the dynamic model $f(s,a,\zeta_\omega)$, while in SVG, it uses the true transition.

\begin{algorithm}[h]
	\caption{MEMB}\label{alg:MEMB}
	\begin{algorithmic}
		\STATE {\textbf{Inputs}: Replay buffer $\mathcal{D}$,  imaginary replay buffer $\mathcal{D}_{img}$, policy $\pi_\theta$, value function $V_\psi$, target value function $V_{\bar{\psi}}$.  Two $Q$ functions with parameters $\phi_0$ and $\phi_1$, dynamic model $f$ with parameter $\omega$, and reward model $\hat{r}$ with parameter $\varphi$}\\
		\FOR{each iteration}
		\STATE{}
		\textbf{1. Train the dynamics model and reward model}\\
		\STATE{Calculate the gradients $\nabla_{\omega}J(\omega)$, $\nabla_{\varphi}J(\varphi)$ using \eqref{equ:model_loss} with $\mathcal{D}$, update $\omega$ and $\varphi$ }
		\STATE{}
		\textbf{2. Interact with environment}
		\STATE{Sample $a_t \sim \pi(a_t|s_t)$, get reward $r_t$, and observe the next state $s_{t+1}$}
		\STATE{Append the tuple $(s_t,a_t,r_t,s_{t+1})$ into $\mathcal{D}$}\\
		\textbf{3. Update the actor, critics $m$ times }
		\STATE{Empty $\mathcal{D}_{img}$}
		\STATE{Sample $(s_0,a_0,r_0,s_1) \sim \mathcal{D}$}
		\FOR{each imaginary rollout step $k$}
		\STATE{Sample $a_k \sim \pi(a_k|s_k)$, get reward $r_k=\hat{r}(s_k,a_k,\zeta_\varphi)$, and sample $s_{k+1} \sim f(s_k,a_k,\zeta_\omega)$}
		\STATE{Append the tuple $(s_k,a_k,r_k,s_{k+1})$ into $\mathcal{D}_{img}$}	
		\ENDFOR
		\STATE{Calculate the gradient $\nabla_{\phi}J(\phi)$ using \eqref{equ:mbsac_q_1} with $\bar{\psi}$ and $\mathcal{D}_{img}$.}
		\STATE{Calculate the gradient $\nabla_{\psi}J(\psi)$ using \eqref{equ:mbsac_v} with $\mathcal{D}$}
		\STATE{Calculate the gradient $\nabla_{\theta}V(s)$ using \eqref{equ:mbsac_re_policy} with $\mathcal{D}_{img}$.}
		\STATE{Update $\phi$, $\psi$, and $\theta$, update $\bar{\psi}$ with Polyak averaging}
		\ENDFOR
	\end{algorithmic}
\end{algorithm}
	\section{Theoretical Analysis}
	In this section, we provide a theoretical analysis for our algorithm. Notice our algorithm basically samples a state from the real data replay buffer and then unrolls the trajectory with several steps. Then using this imaginary data from rollout and the value function trained from real data, we update the policy with our policy learning formulation in \ref{section:policy_learning}. In the following, we first give a general result on how accurate the model long term return is regardless of how many rollout step is used in our algorithm. Later, we provide a more subtle analysis considering the rollout procedure.

	We first investigate the difference between the true long term return $\eta$ and the model long term return $\hat{\eta}$, which is induced by the model bias and the distribution shift due to the updated policy encountering states not seen during model training. Particularly, we denote that the model bias as  $W(p(s'|s,a), \hat{p}(s'|s,a))$, i.e., the Wasserstein distance between the true model and the learned model. We denote the distribution shift as $W(\pi(a|s),\pi_D(a|s))$, where $\pi_D$ is the data-collecting policy and $\pi$ is intermediate policy during the update of the algorithm.  For instance, in our algorithm \ref{alg:MEMB}, $\pi_D$ corresponds to the replay buffer of the true data while $\pi$ is the policy in the imaginary rollout. We assume $W(p(s'|s,a), \hat{p}(s'|s,a))\leq \epsilon_m, \forall s,a$ and $W(\pi(a|s),\pi_D(a|s))\leq \epsilon_\pi, \forall s$.  Comparing with the total variation used in \citep{janner2019trust}, the Wasserstein distance has better representation in the sense of how close $\hat{p}$ approximate $p$ \citep{asadi2018lipschitz}. For instance, if $p$ and $\hat{p}$ has disjoint supports, the total variation is always 1 regardless of how far the supports are from each other. Such case could always happen in the high-dimensional setting \citep{gulrajani2017improved}. 
	
	To bound the error of the long term return, we follow the Lipschitz assumption on the model and policy in \citep{asadi2018lipschitz}. Particularly, a transition model $\hat{p}$ belongs to Lipschitz class model if it is represented by
	$ \hat{p}(s'|s,a)=\sum_{f_m} \mathbbm{1}(f_m(s,a)=s')g_m(f_m) ,$ which says the transition probabilities can be decomposed in to a distribution $g$  over a set of deterministic function $f_m$. The model is called $K_m$ Lipschitz, if $f_m$ is a $K_m$ Lipschitz function. It is easy to understand this in the context of our work when we re-parametrize the model function. For instance, if the transition is deterministic, i.e., $s'=f_m(s,a)$, then $K_m$ is the Lipschitz constant of $f_m$.  Similarly the policy $\pi$ associated with Lipschitz class is given by $\pi(a|s)=\sum_{f_\pi}\mathbbm{1}(f_\pi(s)=a)g_{\pi}(f_\pi) $ and $f_\pi(s)$ is $K_\pi$ Lipschitz. If we use neural network to approximate model and policy, Lipschitz continuity means the gradient w.r.t the input is bounded by a constant.  Here for simplicity, we assume $g_m$ and $g_{\pi}$ are independent with state and action. The similar bound including such dependence can be proved but with more involved assumption and notations.
	
	Notice the analysis in  \citep{asadi2018lipschitz} just considers the error caused by the model bias and neglect the the effect of distribution shift of the policy. Therefore they assume $\pi_D=\pi$ and thus do not need the Lipschitz assumption on $\pi$. In addition, we give a bound considering the $k$ step rollout in Theorem \ref{theorem:branched_result}, which is not covered by \citep{asadi2018lipschitz}. In the following, we first give a general result to bound true long term return $\eta$ and model long term return $\hat{\eta}$, where we assume the reward model is known and mainly focus on the effect of the model bias and distribution shift. 
	
	\begin{theorem}\label{theorem:general_result}
		Let the true transition model $p$ and the learned transition model $\hat{p}$ both be $K_m$ Lipschitz. We also assume policy $\pi(a|s)$ and reward function $r(s,a)$ are $K_\pi$ and $ K_r$ Lipschitz respectively. Suppose $W(\hat{p}(s'|s,a), p(s'|s,a))\leq \epsilon_m, \forall (s,a)$, $W(\pi(a|s),\pi_D(a|s))\leq \epsilon_\pi, \forall s$. Let $\bar{K}:=K_{\pi}K_{m}$ and assume $\bar{K}\in [0,1) $, then the difference between true return and model return is bounded as $|\eta(\pi)-\hat{\eta}(\pi)|\leq 2\big[\frac{\gamma K_r \bar{K}}{(1-\gamma)(1-\gamma\bar{K})}+\frac{K_r}{1-\gamma} \big]\epsilon_{\pi}+\frac{\gamma K_r\bar{K}}{(1-\gamma)(1-\gamma \bar{K})}\epsilon_m. $  
	\end{theorem}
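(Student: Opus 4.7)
The plan is to use Kantorovich--Rubinstein duality together with a Wasserstein recursion on the per-timestep state occupancies. Since $r$ is $K_r$-Lipschitz, for any two distributions $\nu,\nu'$ on $\mathcal{S}\times\mathcal{A}$ one has $|\mathbb{E}_\nu r-\mathbb{E}_{\nu'}r|\le K_r W(\nu,\nu')$, so the task reduces to controlling Wasserstein distances between the occupancies produced by the four combinations (true/learned model)$\times$(policy $\pi$/data-collecting policy $\pi_D$).

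To account for the $2\epsilon_\pi$ prefactor, I would insert the two auxiliary returns $\eta(\pi_D)$ and $\hat\eta(\pi_D)$ and apply the triangle inequality
\begin{equation*}
|\eta(\pi)-\hat\eta(\pi)|\le |\eta(\pi)-\eta(\pi_D)| + |\eta(\pi_D)-\hat\eta(\pi_D)| + |\hat\eta(\pi_D)-\hat\eta(\pi)|.
\end{equation*}
The middle term isolates the pure model-bias contribution (same policy, different transition); the first and third isolate the policy-shift contribution under the true and under the learned model, respectively, and together produce the factor of $2$.

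The engine of the proof is a short list of one-step Wasserstein inequalities for the closed-loop operator $T^{\pi,p}\mu(s')=\int p(s'|s,a)\pi(a|s)\mu(s)\,ds\,da$, and analogously $T^{\pi,\hat p}$ and $T^{\pi_D,p}$. Using the Lipschitz-class representations of $p$ and $\pi$ together with common-randomness couplings, I would establish the contraction $W(T^{\pi,p}\mu,T^{\pi,p}\mu')\le\bar K\,W(\mu,\mu')$, the model-perturbation estimate $W(T^{\pi,p}\mu,T^{\pi,\hat p}\mu)\le\epsilon_m$ (via convexity of $W$), and the policy-perturbation estimate $W(T^{\pi,p}\mu,T^{\pi_D,p}\mu)\le K_m\epsilon_\pi$. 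Iterating the appropriate pair of inequalities from $\mu_0^\pi=\mu_0^{\pi_D}$ gives a bound of the form $W(\mu_t,\mu_t')\le\epsilon\sum_{i=0}^{t-1}\bar K^i$. For the first and third outer terms I also add in the extra additive $K_r\epsilon_\pi$ at each $t$ coming from the fact that, at identical states, $|\mathbb{E}_{a\sim\pi}r(s,a)-\mathbb{E}_{a\sim\pi_D}r(s,a)|\le K_r\epsilon_\pi$ directly; this accounts for the $K_r/(1-\gamma)\cdot\epsilon_\pi$ summand in the stated bound.

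Finally, plugging the Wasserstein recursion into $\sum_t\gamma^t K_r W(\mu_t,\mu_t')$ leaves the standard double-geometric sum
\begin{equation*}
\sum_{t\ge 1}\gamma^t\frac{1-\bar K^t}{1-\bar K}=\frac{\gamma}{(1-\gamma)(1-\gamma\bar K)},
\end{equation*}
which, multiplied by the appropriate constants, reproduces the $\frac{\gamma K_r\bar K}{(1-\gamma)(1-\gamma\bar K)}$ factor sitting in front of both $\epsilon_m$ and $\epsilon_\pi$. The main obstacle I expect is obtaining the closed-loop contraction constant exactly as $\bar K=K_mK_\pi$, rather than the naive $K_m(1+K_\pi)$ one gets by summing coordinate metrics on $\mathcal{S}\times\mathcal{A}$; this requires choosing the metric on $\mathcal{S}\times\mathcal{A}$ that is compatible with the Lipschitz-class representation so that the composite map $s\mapsto f_m(s,f_\pi(s))$ is truly $K_mK_\pi$-Lipschitz, and driving both state and action components of the coupling by a single shared draw of $(f_m,f_\pi)$.
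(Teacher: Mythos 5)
Your plan is correct and follows essentially the same route as the paper: its Lemmas 2--5 are precisely your closed-loop contraction with constant $\bar K=K_\pi K_m$, the one-step model- and policy-perturbation estimates, the per-step additive $K_r\epsilon_\pi$ from the action marginal, and the double geometric sum $\sum_t\gamma^t(1-\bar K^t)/(1-\bar K)=\gamma/((1-\gamma)(1-\gamma\bar K))$. The only cosmetic difference is the outer triangle inequality --- the paper uses the single pivot $\eta(\pi_D)$ and bounds $|\eta(\pi_D)-\hat\eta(\pi)|$ with both perturbations at once, whereas you insert the two pivots $\eta(\pi_D)$ and $\hat\eta(\pi_D)$ --- which yields identical constants because the per-term bound is additive in $\epsilon_m$ and $\epsilon_\pi$.
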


	Notice above theorem is a generic result on the model based reinforcement learning.   Such analysis is based on running \emph{full} rollout of the learned model, which results in the compounded error. However, notice in our algorithm, we actually start a rollout from a state with the distribution induced by the previous policy $\pi_D$ and then run imaginary rollout for k steps using the current policy $\pi$ on learned transition model $\hat{p}$. Follow the notion in \citep{janner2019trust}, we call it k-step branched rollout.  We use $\eta^{branch}$ to describe the long term return of this branched rollout and have a fine-grained analysis in the following.
	
	\begin{theorem}\label{theorem:branched_result}
		Let the true transition model $p$ and learned transition model $\hat{p}$ both be $K_m$ Lipschitz. We also assume policy $\pi(a|s)$ and reward function $r(s,a)$ are $K_\pi$ and $ K_r$ Lipschitz respectively. Suppose $W(\hat{p}(s'|s,a), p(s'|s,a))\leq \epsilon_m, \forall (s,a)$, $W(\pi(a|s),\pi_D(a|s))\leq \epsilon_\pi, \forall s$. If $\bar{K}:=K_{\pi}K_{m}\in [0,1) $, then the difference between true return and branched return is bounded as $|\eta(\pi)-\eta^{branch}(\pi)|\leq K_r[\frac{\gamma^{k+1} \bar{K}}{(1-\gamma)(1-\bar{K}\gamma)}+\frac{\gamma^k}{1-\gamma}]\epsilon_\pi+K_r[\frac{K_\pi(1-\gamma^k)}{(1-\bar{K})(1-\gamma)}-\frac{K_\pi(1-(\gamma \bar{K})^k)}{(1-\gamma\bar{K})(1-\bar{K})}+\gamma^k \frac{K_\pi(1-\bar{K}^k)}{ (1-\bar{K})(1-\gamma)}]\epsilon_m.$  
	\end{theorem}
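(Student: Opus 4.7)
The plan is to express both $\eta(\pi)$ and $\eta^{\text{branch}}(\pi)$ as discounted sums
$\sum_{t=0}^{\infty} \gamma^t \mathbb{E}_{(s,a) \sim \rho_t}[r(s,a)]$
over appropriate state--action occupancy distributions $\rho_t$, then convert the gap between rewards at time $t$ into a Wasserstein distance between the two occupancies via Kantorovich--Rubinstein duality and the $K_r$-Lipschitzness of $r$. The master inequality I would work with is
$|\eta(\pi) - \eta^{\text{branch}}(\pi)| \leq K_r \sum_{t=0}^{\infty} \gamma^t\, W(\rho_t^{\pi, p},\, \rho_t^{\text{branch}})$,
after which the task reduces to a recursion for the per-step Wasserstein gap.

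The core per-step lemma is of the same flavour as Lemma 3 of Asadi et al.: if two state distributions $d, d'$ satisfy $W(d, d') \leq \delta$ and both evolve one step under the same $K_\pi$-Lipschitz policy and the same $K_m$-Lipschitz transition model, then the Wasserstein distance between the resulting state distributions is at most $\bar{K}\delta = K_\pi K_m \delta$. If instead one rollout uses $p$ and the other uses $\hat{p}$ with $W(p,\hat{p}) \leq \epsilon_m$, an additional $\epsilon_m$ is injected at that step; if instead the policies differ with $W(\pi, \pi_D) \leq \epsilon_\pi$, a $K_m \epsilon_\pi$ term is injected. I would formalize these three atomic error-propagation rules and use them as the building blocks throughout.

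Next, I would split the trajectory of the branched process into two regimes. In Phase 1, $0 \leq t < k$, the imaginary rollout runs $\pi$ on $\hat{p}$, so at each step a fresh $\epsilon_m$ contribution is injected and then contracted by $\bar{K}$ at each subsequent step; simultaneously, the initial mismatch between $\pi_D$ and $\pi$ at the branching point contributes an $\epsilon_\pi$ that propagates through Phase 1. In Phase 2, $t \geq k$, no further $\hat{p}$-steps are taken, so the model-error injection halts, but the gap accumulated by time $k$ continues to propagate under $(\pi, p)$, contracting by $\bar{K}$ per step while being weighted by $\gamma^t$. Summing the per-step Wasserstein bounds against $\gamma^t$ produces two families of geometric series: finite ones $\sum_{t=0}^{k-1}(\gamma\bar{K})^t$ and $\sum_{t=0}^{k-1}\gamma^t$ from Phase 1, and tail ones $\sum_{t=k}^{\infty}\gamma^t \bar{K}^{t-k}$ from Phase 2. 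Evaluating and regrouping these yields the coefficient $\tfrac{\gamma^{k+1}\bar{K}}{(1-\gamma)(1-\gamma\bar{K})} + \tfrac{\gamma^k}{1-\gamma}$ of $\epsilon_\pi$, while the $\epsilon_m$ coefficient arises from an identity of the form $\sum_{t<k}\gamma^t(1-\bar{K}^t) = \sum_{t<k}\gamma^t - \sum_{t<k}(\gamma\bar{K})^t$, which is exactly the source of the subtraction $\tfrac{K_\pi(1-\gamma^k)}{(1-\bar{K})(1-\gamma)} - \tfrac{K_\pi(1-(\gamma\bar{K})^k)}{(1-\gamma\bar{K})(1-\bar{K})}$ in the stated bound, together with a residual tail $\gamma^k \tfrac{K_\pi(1-\bar{K}^k)}{(1-\bar{K})(1-\gamma)}$ from Phase 2.

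The main obstacle will be the bookkeeping at the Phase 1--Phase 2 interface: model error injected at an intermediate step $s < k$ must first be contracted by $\bar{K}^{t-s}$ while we are still inside the imagination window, and then continue to be contracted (without new injection) once we leave it, all while being reweighted by $\gamma^t$ at the timestep at which it is felt. Getting the exponents right and distinguishing the $1-\bar{K}$ denominator (pure contraction in Phase 2) from the $1-\gamma\bar{K}$ denominator (simultaneous discount and contraction in Phase 1) is the delicate part; conceptually, however, the argument is a direct extension of the Wasserstein-recursion technique of Asadi et al.\ to two simultaneous error sources and a finite-length imagination phase.
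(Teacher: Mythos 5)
Your proposal follows essentially the same route as the paper: bound the return gap by $K_r\sum_t \gamma^t W(\cdot,\cdot)$ via Kantorovich--Rubinstein duality, propagate per-step Wasserstein errors through the $\bar{K}=K_\pi K_m$ contraction recursion (the paper's Lemmas \ref{lemma:distance_joint}--\ref{lemma: distance_n_step}), split at the branch step $k$, and sum the resulting geometric series exactly as in Lemma \ref{lemma:branch_m}. The only cosmetic difference is that the paper assembles the final bound by inserting a reference process $\eta^{\pi_D,\pi}$ and applying the branched-rollout lemma twice (once with $\epsilon_m=0$, once with $\epsilon_\pi=0$), whereas you track both error injections in a single recursion; both yield the stated coefficients.
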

	Clearly, on the right hand side of the bound, some terms increase with the rollout step $k$ while the others decreases. Thus there exist a best $k^*$ which depends on the discount factor $\gamma$, $\epsilon_m$ and $\epsilon_\pi$.  In general, the imaginary data from short rollout is still trustful. Recall we can apply the rollout of Bellman equation in two different ways in our algorithm: (1) Policy update (equation \eqref{equ:bellman_repar}). (2) value function learning in section \ref{section:value_function_learning}. So we do ablation study on these two ways and find that the imaginary data in value function learning would degrade the performance while improves the learning a lot in policy update, which also explains why MEMB is much better than SVG.  Another interesting result is on $\epsilon_m$. In our algorithm \ref{alg:MEMB}, in each iteration, we can update policy $m$ times using the imaginary data from the model. If $m$ is too large, it will cause a large distribution shift $\epsilon_\pi$ and degrade the performance. As such, typically we choose $m$ as $3$ to $5$.
	
	\begin{figure*}[t]
\vspace{-5mm}
   \centering
    \subfloat[Inverted Pendulum]{\includegraphics[width=.30\textwidth]{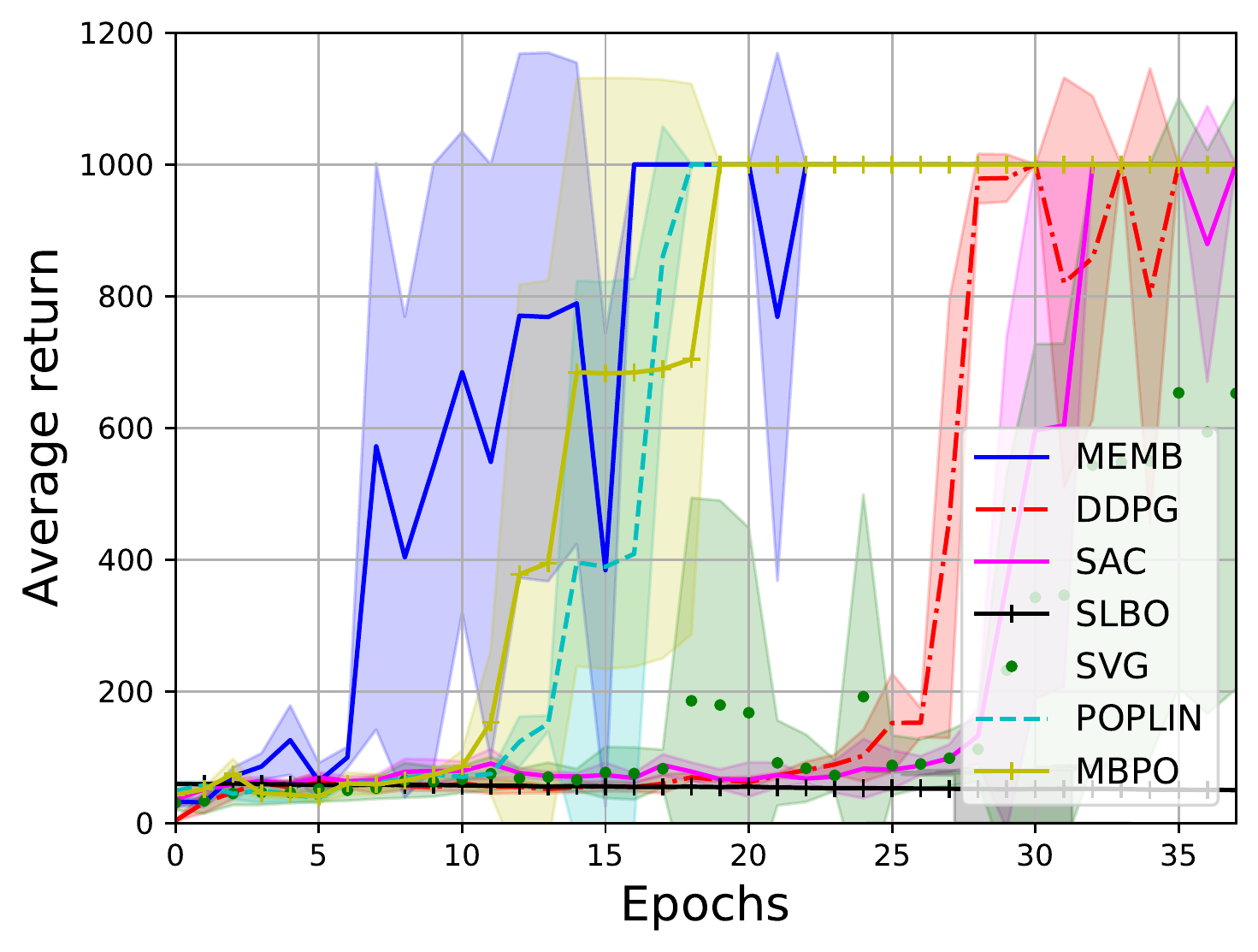}}\quad
    \subfloat[HalfCheetah]{\includegraphics[width=.3\textwidth]{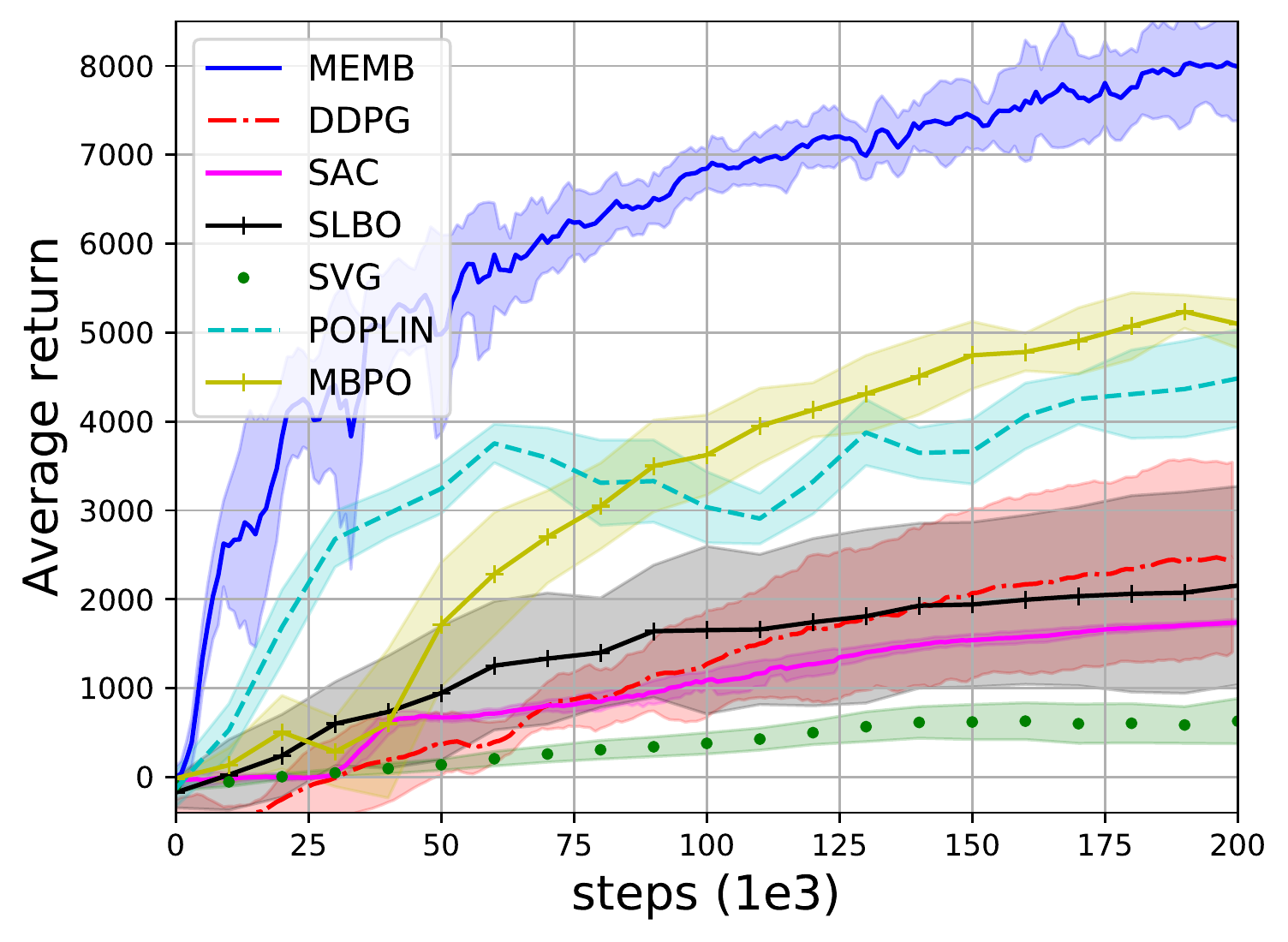}}\quad
   \subfloat[Reacher]{\includegraphics[width=.305\textwidth]{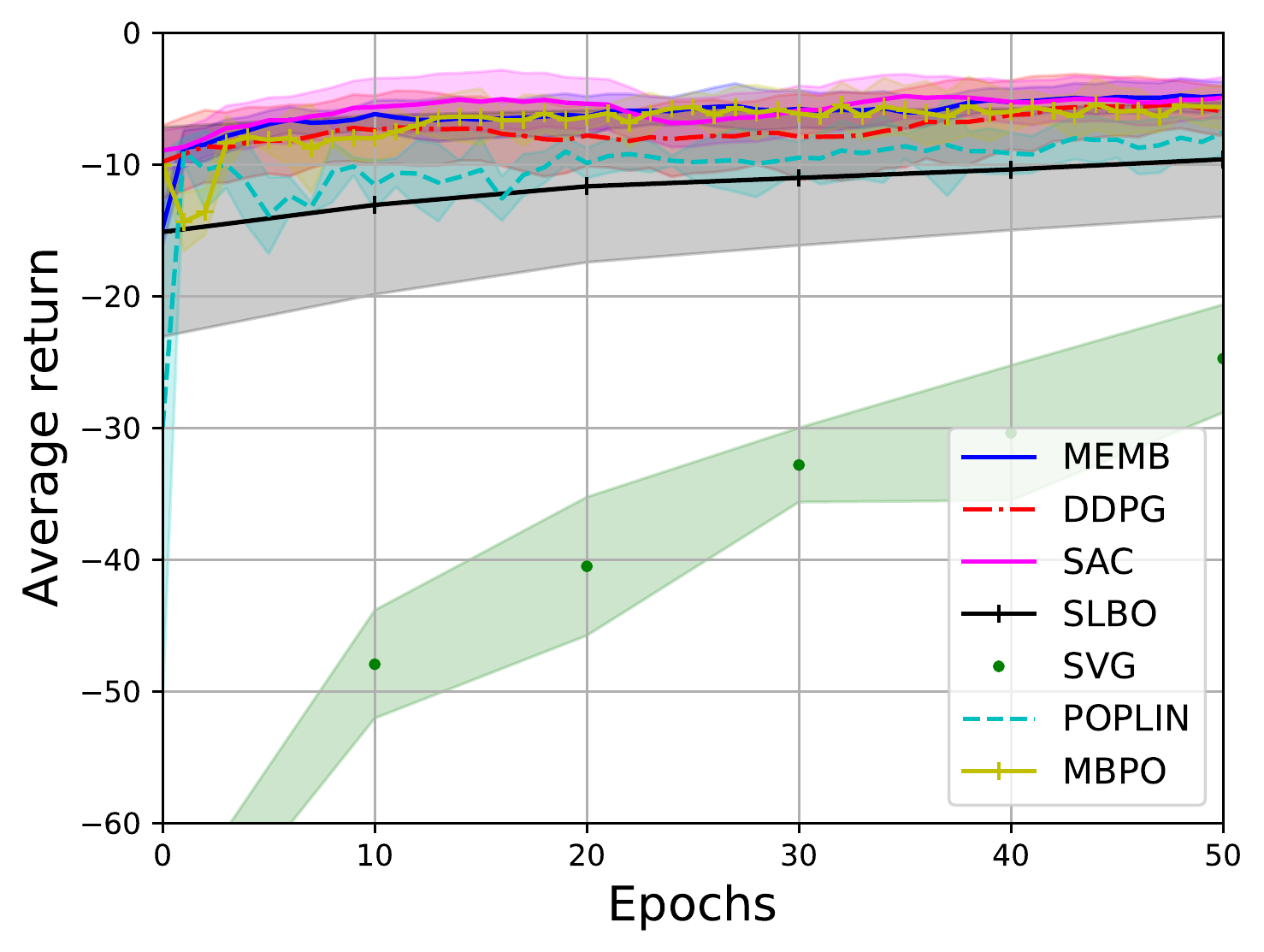}}\quad
    \vspace{-3mm}
    \subfloat[Hopper]{\includegraphics[width=.3\textwidth]{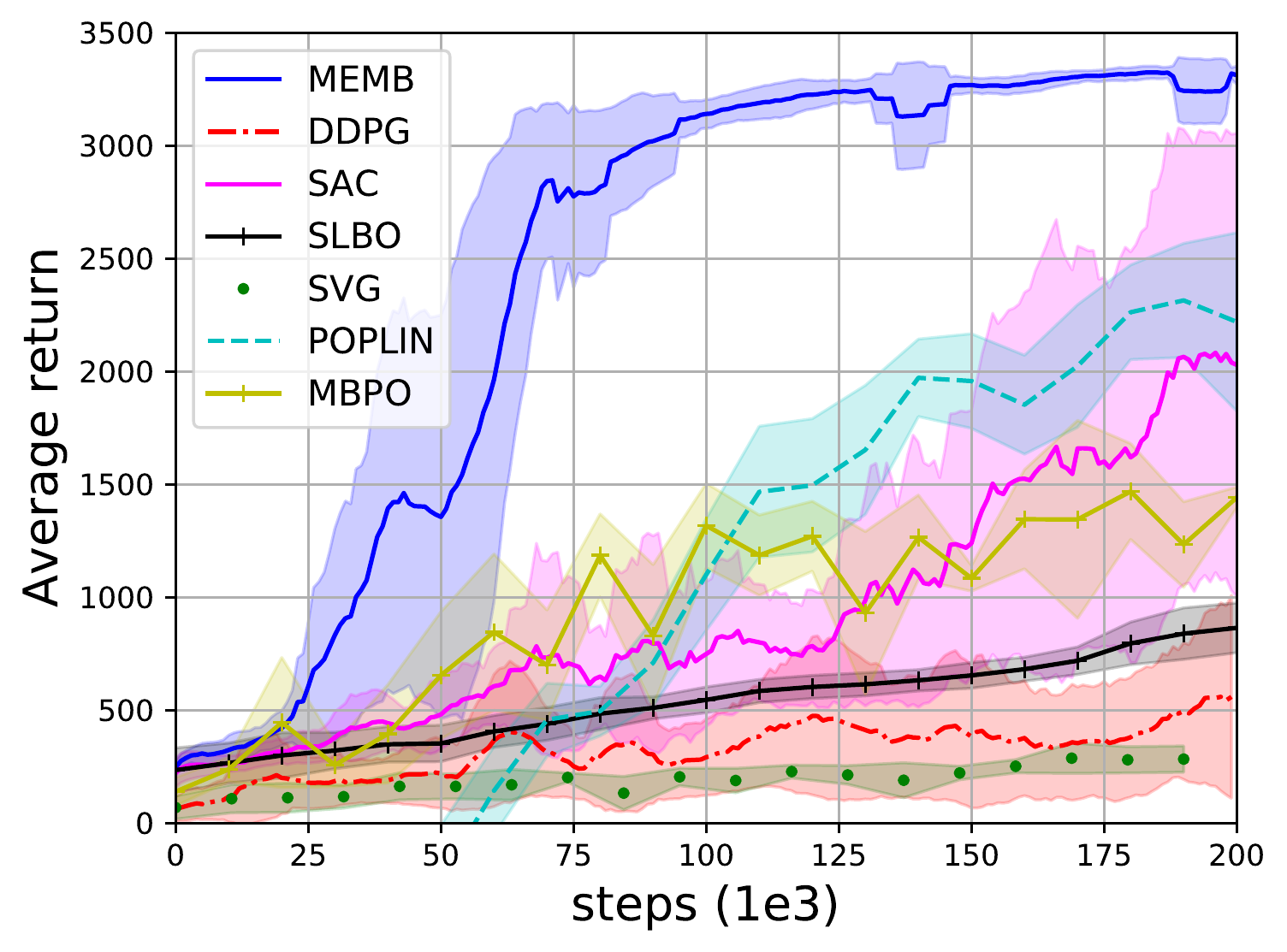}}\quad
    \subfloat[Swimmer]{\includegraphics[width=.3\textwidth]{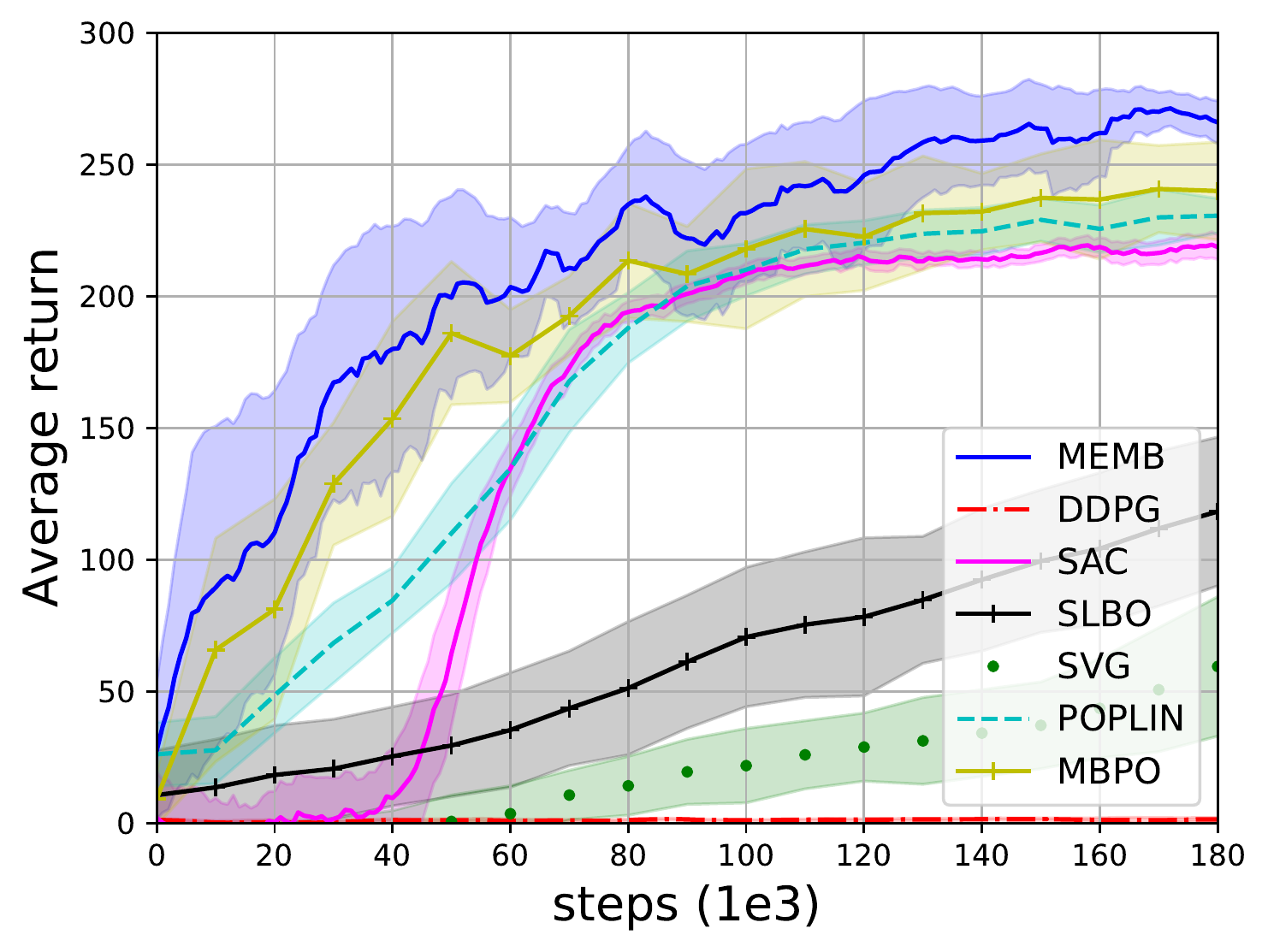}}\quad
   \subfloat[Walker2D]{\includegraphics[width=.3\textwidth]{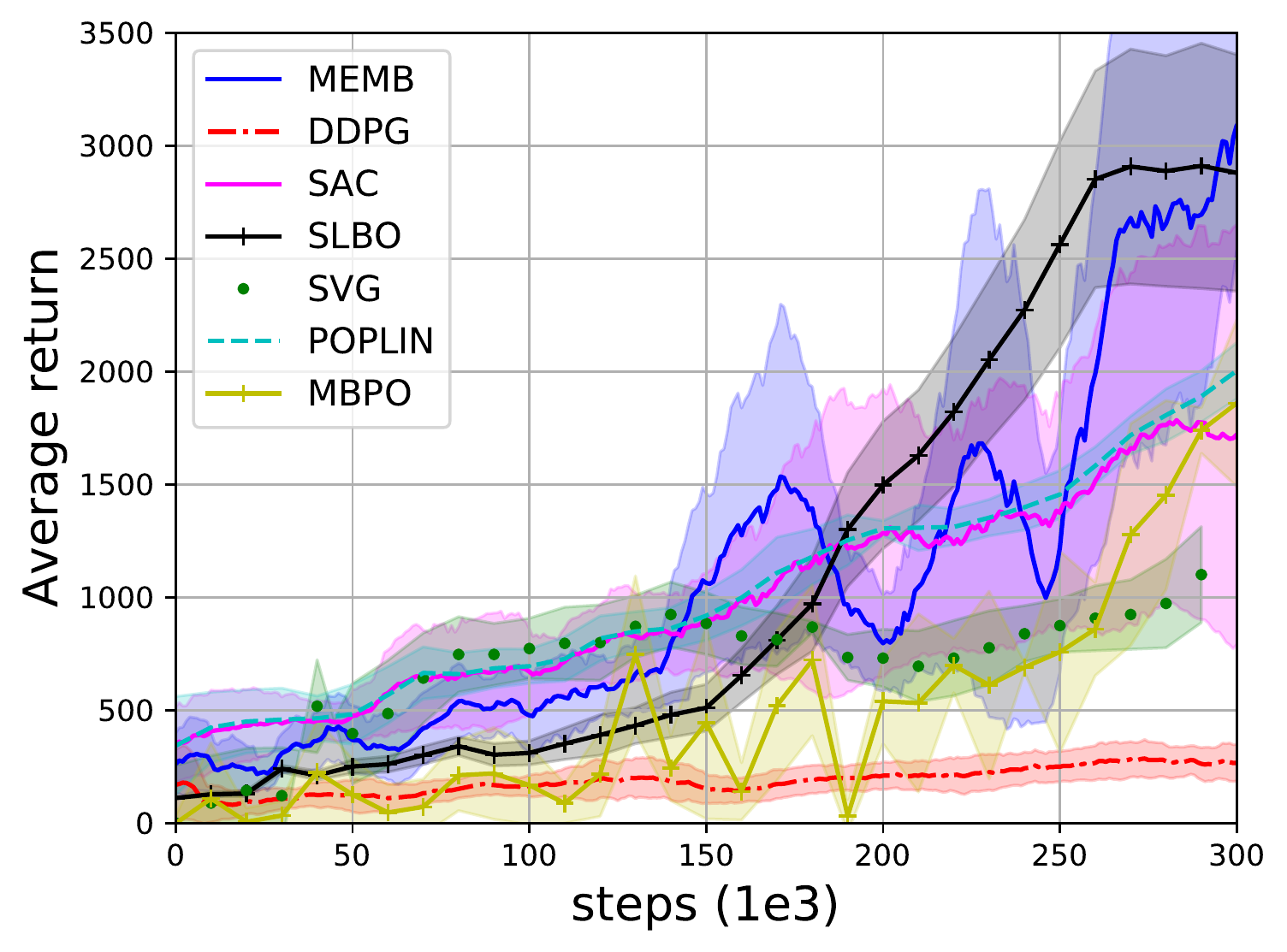}}\quad
   \caption{Performance of  MEMB and other baselines in benchmark tasks. The x-axis is the training step (epoch or step). Each experiment is tested on five trials using five different random seeds and initialized parameters.  For a simple task, i.e., InvertedPendulum, we limit the training steps at 40 epochs. For the other three complex tasks, the total training steps are 200K or 300K. The solid line is the mean of the average return. The shaded region represents the standard deviation. On  HalfCheetah, Hopper, and Swimmer, MEMB outperforms the other baselines significantly. In the task Walker2d, SLBO is slightly better than MEMB. They both surpass other algorithms. On Reacher, MEMB and SAC perform best. 
}
   \label{fig:experiment}
  	\vspace{-2mm}
\end{figure*}

\section{Experimental results}

 In this section, we would like to answer two questions: (1) How does MEMB perform on some benchmark reinforcement learning tasks comparing with other state-of-the-art model-based and model-free reinforcement learning algorithms? (2) Whether we should use the imaginary data generated by the model embedding in the policy learning in Section \ref{section:policy_learning}. How many imaginary data we should use in the value function update in Section \ref{section:value_function_learning}? We leave the answer of second question in the ablation study in appendix \ref{section:ablation_study}.
 
\textbf{Environment:}
To answer these two questions, we experiment in the Mujoco simulation environment \citep{todorov2012mujoco}: InvertedPendulum-v2, HalfCheetah-v2, Reacher-v2, Hopper-v2, Swimmer-v2, and Walker2d-v2. Each experiment is tested on five trials using five different random seeds and initialized parameters. The details of the tasks and experiment implementations can be found in appendix \ref{appendix:experiment}.

\textbf{Comparison to state-of-the-art}: We compare our algorithm with state-of-the-art model-free and model-based reinforcement learning algorithms in terms of sample complexity and performance. DDPG \citep{lillicrap2015continuous} and SAC \citep{haarnoja2018soft} are two model-free reinforcement learning algorithms on continuous action tasks. SAC has shown its reliable performance and robustness on several benchmark tasks. Our algorithm also builds on the maximum entropy reinforcement learning framework and benefits from incorporating the model in the policy update. Four model-based reinforcement learning baselines are SVG \citep{heess2015learning},SLBO \citep{luo2018algorithmic}, MBPO \citep{janner2019trust} and POPLIN \citep{wang2019exploring}. Notice in SVG, the algorithm just computes the gradient in the real trajectory, while our MEMB updates policy using the imaginary data $m$ times generated from the model. At the same time, we avoid the importance sampling by using the data from the learned model.  SLBO is a model-base algorithm with performance guarantees that applies TRPO \citep{schulman2015trust} on the data set generated from the rollout of the model. MBPO has the similar spirit but with SAC as the learning algorithm on the imaginary data.
 
For fairness, we compare the baseline without the ensemble learning techniques \citep{chua2018deep}. These techniques are known to reduce the model bias. We do not use distributed RL either to accelerate the training. We believe that the above-mentioned skills are orthogonal to our work and could be integrated into the future work to further improve the performance.  We just compare this pure version of MEMB with other baselines.  We also notice that some recent works in MBRL modify the benchmarks to shorten the task horizons and simplify the model problem while some work assume the true terminal condition is known to the algorithm \citep{wang2019benchmarking}. On the contrary, we test our algorithm in the full-length tasks and do not have assumptions on the terminal condition.

 We present experimental results in Figure \ref{fig:experiment}. In a simple task, InvertedPendulum, MEMB achieves the asymptotic result just using 16 epochs. In HalfCheetah,  MEMB's performance is at around 8000 at 200k steps, while all the other baselines' performance is below 5300. In Reacher, MEMB and SAC have similar performance. Both of them are better than other algorithms.  In Hopper, the final performance of MEMB is around 3300. The runner-up is POPLIN whose final performance is around 2300. In Swimmer, the performance of MEMB is the best. In Walker2d, SLBO is slighter better than MEMB. Both of them achieve the average return of 2900 at 300k timesteps. 

\bibliography{MBRL}
\bibliographystyle{icml2020}

\clearpage
\onecolumn
\appendix

\section{Related work}\label{appendix:related_work}
There are a plethora of works on MBRL. They can be classified into several categories depending on the way to utilize the model, to search the optimal policy or the function approximator of the dynamics model. Iterative Linear Quadratic-Gaussian (iLQG) \citep{tassa2012synthesis} assumes that the true dynamics are known to the agent. It approximates the dynamics with linear functions and the reward function with quadratic functions. Hence the problem can be transferred into the classic LQR problem. In Guided Policy Search \citep{levine2013guided,levine2014learning,finn2016guided}, the system dynamics are modeled with the time-varying Gaussian-linear model.  It approximated the policy with a neural network $\pi$ by minimizing the KL divergence between iLQG and $\pi$. A regularization term is augmented into the reward function to avoid the over-confidence on the policy optimization. Nonlinear function approximators can be leveraged to model more complicated dynamics. \citet{deisenroth2011pilco} use Gaussian processes to model the dynamics of the environment. The policy gradient can be computed analytically along the training trajectory. However, it may suffer from the curse of dimensionality which hinders its applicability in the real problem.  Recently, more and more works incorporate the deep neural network into MBRL. \citet{heess2015learning} model the dynamics and reward with neural networks, and compute the gradient with the true data. \citet{richards2005robust,nagabandi2018neural} optimize the action sequence to maximize the expected planning reward along with the learned dynamics model and then the policy is fine-tuned with TRPO. \citet{luo2018algorithmic,chua2018deep,kurutach2018model,janner2019trust} use the current policy to gather the data from the interaction with the environment and then learn the dynamics model. In the next step, the policy is improved (trained by the model-free reinforcement learning algorithm) with a large amount of imaginary data generated by the learned model. MEMB may reduce to their work by updating the policy with a model-free algorithm. \citet{janner2019trust} provide an error bound on the long term return of the k-step rollout given that the total variation of model bias and policy distribution are bounded by $\epsilon$. However, as we discussed, total variation may not be a good measure to describe the difference of learned model and true model especially when the support of the distributions is disjoint.  Ensemble learning can also be applied to further reduce the model error. \citet{asadi2018lipschitz} leverage the Lipschitz model to analyze the model bias in the long term return. Our work consider the both model bias and the distribution shift on the policy. In addition, we analyze the k-step rollout while \citet{asadi2018lipschitz} just gives the result on full rollout.

\section{Ablation Study}\label{section:ablation_study}
\subsection{How to utilize the imaginary data}
In this section, we make the ablation study to understand how much imaginary data we should include in the algorithm. Remind that in our algorithm, the model is embedded in the Soft Bellman equation in the policy update step, which means we fully trust the model to compute the policy gradient. This way to utilize the imaginary data would improve the sample-efficiency. To confirm our claim, we compare the MEMB with SVG on the task HalfCheetah. Notice when we just utilize the real trajectory in the policy update, MEMB reduces to the SVG\footnote{It still has some difference such as the update rule of Q and V function}. To remove the effect of the entropy regularization, we vary $\alpha$ (the regularizer parameter of entropy) in MEMB.  When $\alpha=0$, it reduces to the non-regularized formulation. In that case, we add the noise in policy for the exploration. We report the result in panel (a) of Fig \ref{fig:ablation}.  It is clear that using imaginary data in policy update improves the learning with a wide margin.

 In Section \ref{section:value_function_learning}, we train $Q$ and $V$ with the true data set. In the experiment, we also try the value expansion introduced in \citep{feinberg2018model}. We test the algorithm with value expansion, particularly with horizon $H=2$ and $H=5$. Our conclusion is that including the imaginary data to train the value function in our algorithm would hurt the performance, especially in the complex tasks. We demonstrate the performance of MEMB with value expansion in panel (b) and (c) of Figure \ref{fig:ablation}. We first test the algorithm on a simple task Pendulum from OpenAI gym \citep{brockman2016openai} and show the result in Panel (b) of Figure \ref{fig:ablation}. MEMB with $H=1$ converges to the optimal policy within several epochs. When we increase the value of $H$, the performance decreases. Then we evaluate the performance of value expansion in a complex task HalfCheetah from the Mujoco environment \citep{todorov2012mujoco} in panel (c) of Figure \ref{fig:ablation}.  In this task, value expansion with $H=2$ and $H=5$ does not work at all. The reason would be that the dynamics model of HalfCheetah introduces more significant model bias comparing to the simple task Pendulum. Thus training both policy and value function in the imaginary data set may cause a large error in policy gradient.

 \begin{figure*}[t]
\centering
\subfloat[HalfCheetah (policy)]{\includegraphics[width=.315\textwidth]{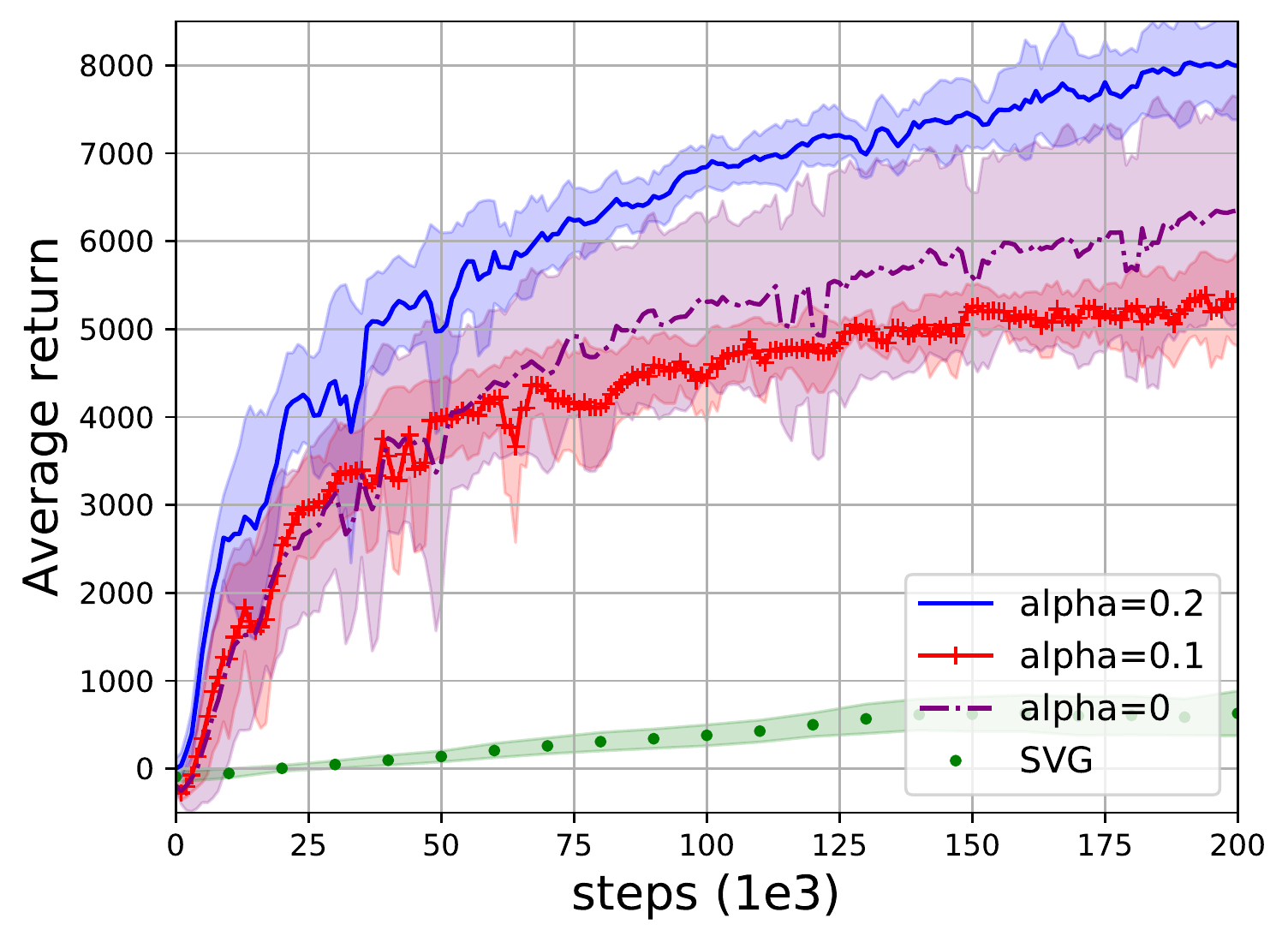}}\quad
\subfloat[Pendulum (value)]{\includegraphics[width=.315\textwidth]{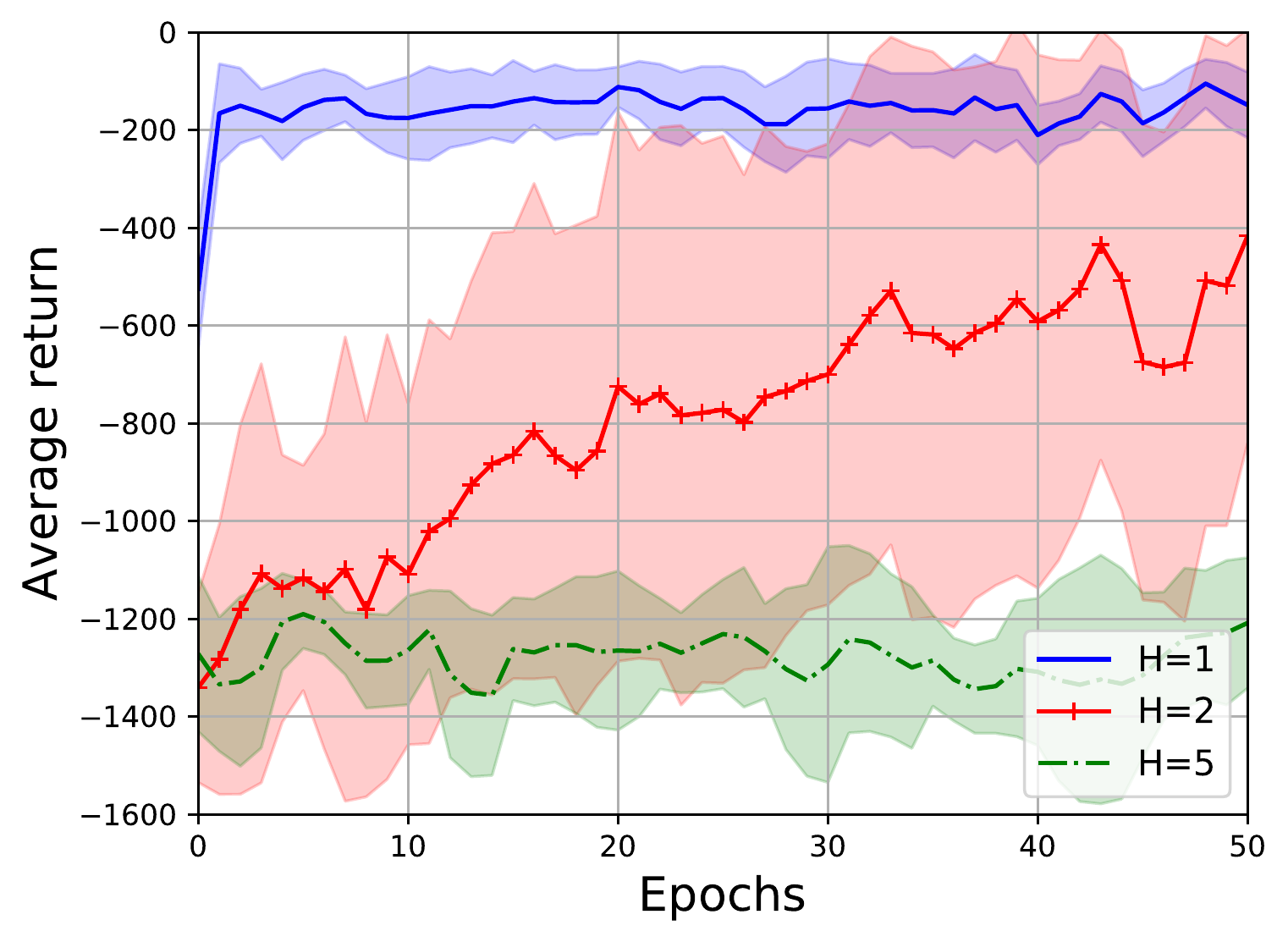}}\quad
\subfloat[HalfCheetah (value)]{\includegraphics[width=.315\textwidth]{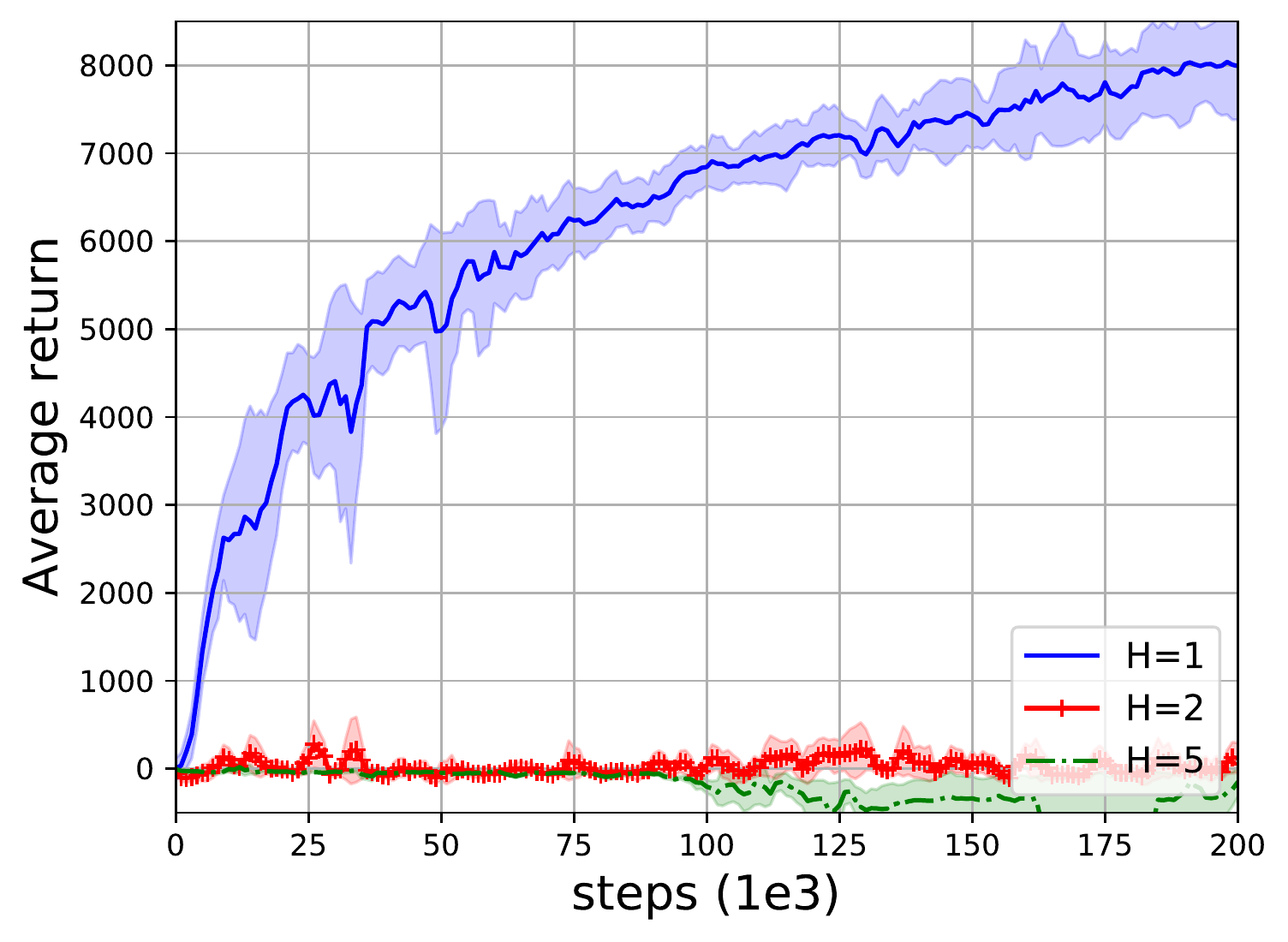}}\quad
 \caption{Ablation study. In (a) we do the ablation study on the effect of the imaginary data on policy learning.  In (b) and (c) we do ablation study on the value function learning with different length of rollout H. The x-axis is the training step and the y-axis is the reward. }
\label{fig:ablation}
\end{figure*}

\subsection{Model Error}

We test the difference between the true model and learned model (using Wasserstein distance), i.e., model error. In particular, we record the learned model by MEMB every 10 epochs and then randomly sample the state action pair $(s,a)$. Feed this pair to the learn model we can obtain the predicted next state $\hat{s}'$, which is used to compared with the true next state $s'$. The error is averaged over state action pair. We do similar things on the reward model. Result are tested on five trials using five different random seeds. They are reported in Figure \ref{fig:model_error}.

 \begin{figure*}[b]
\centering
\subfloat[Transition model error]{\includegraphics[width=.45\textwidth]{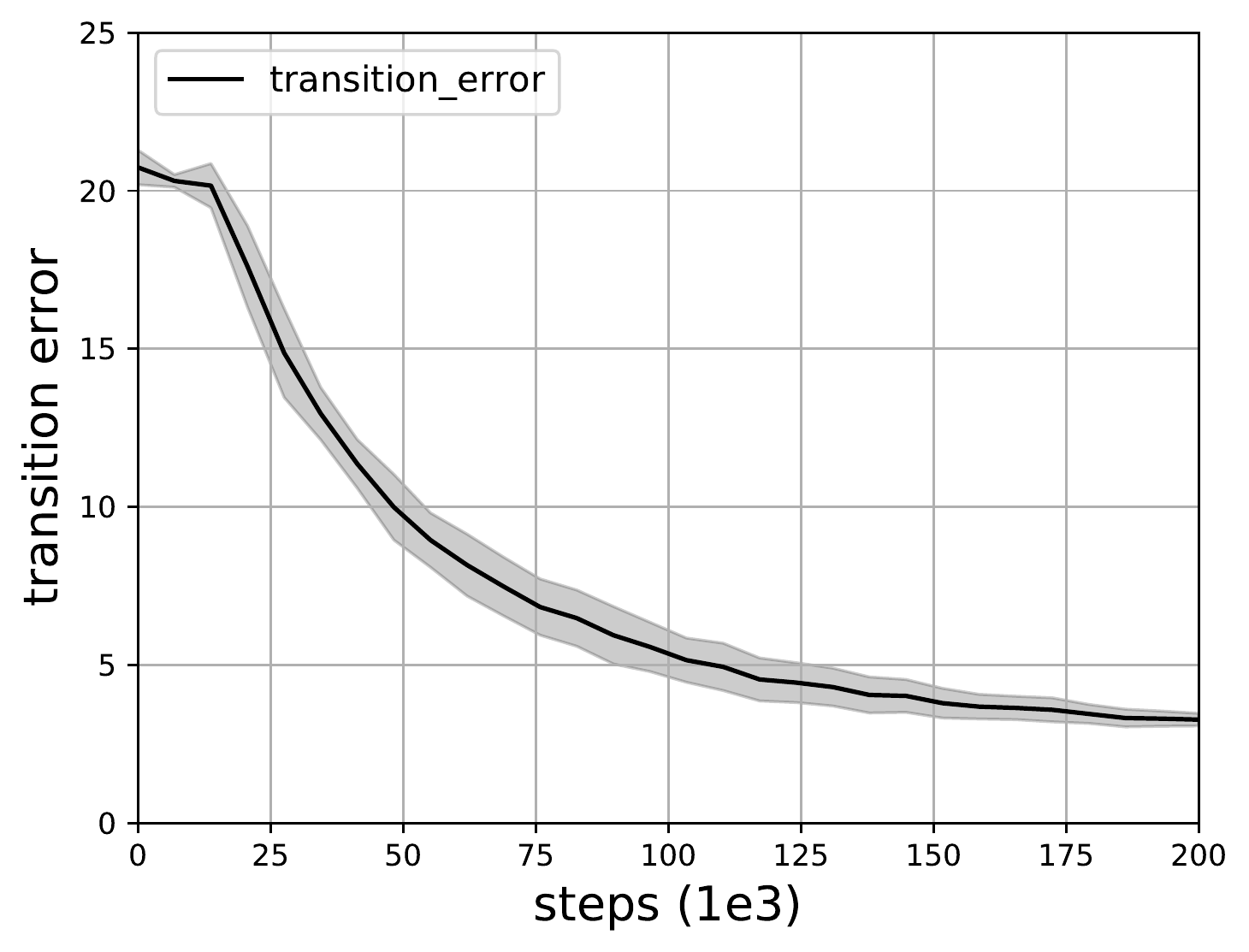}}\quad
\subfloat[Reward model error]{\includegraphics[width=.45\textwidth]{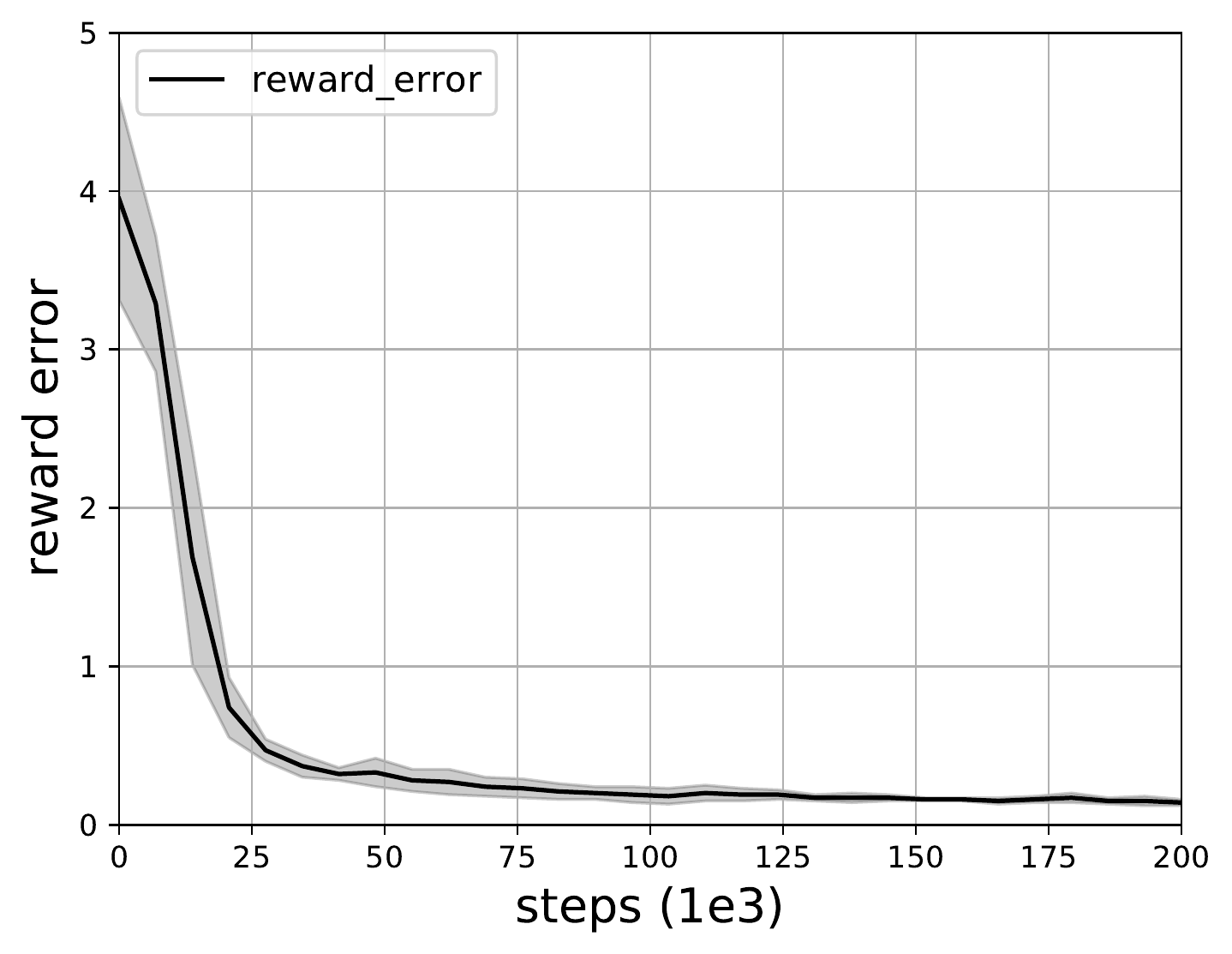}}\quad
 \caption{Model error. We calculate the model error on the environment of HalfCheetah. }
\label{fig:model_error}
\end{figure*}

\subsection{Plug The True Model Into MEMB }

It is interesting to see the performance of MEMB if we plugin the true model in the algorithm.  Since the dynamic in Mujoco is complicated, we just test a simple task pendulum.  Intuitively, MEMB with true model should have better performance than the MEMB with learned one. We verify this in the Figure \ref{fig:true_vs_learn}.

\begin{figure}
    \centering
    \includegraphics[width=.45\textwidth]{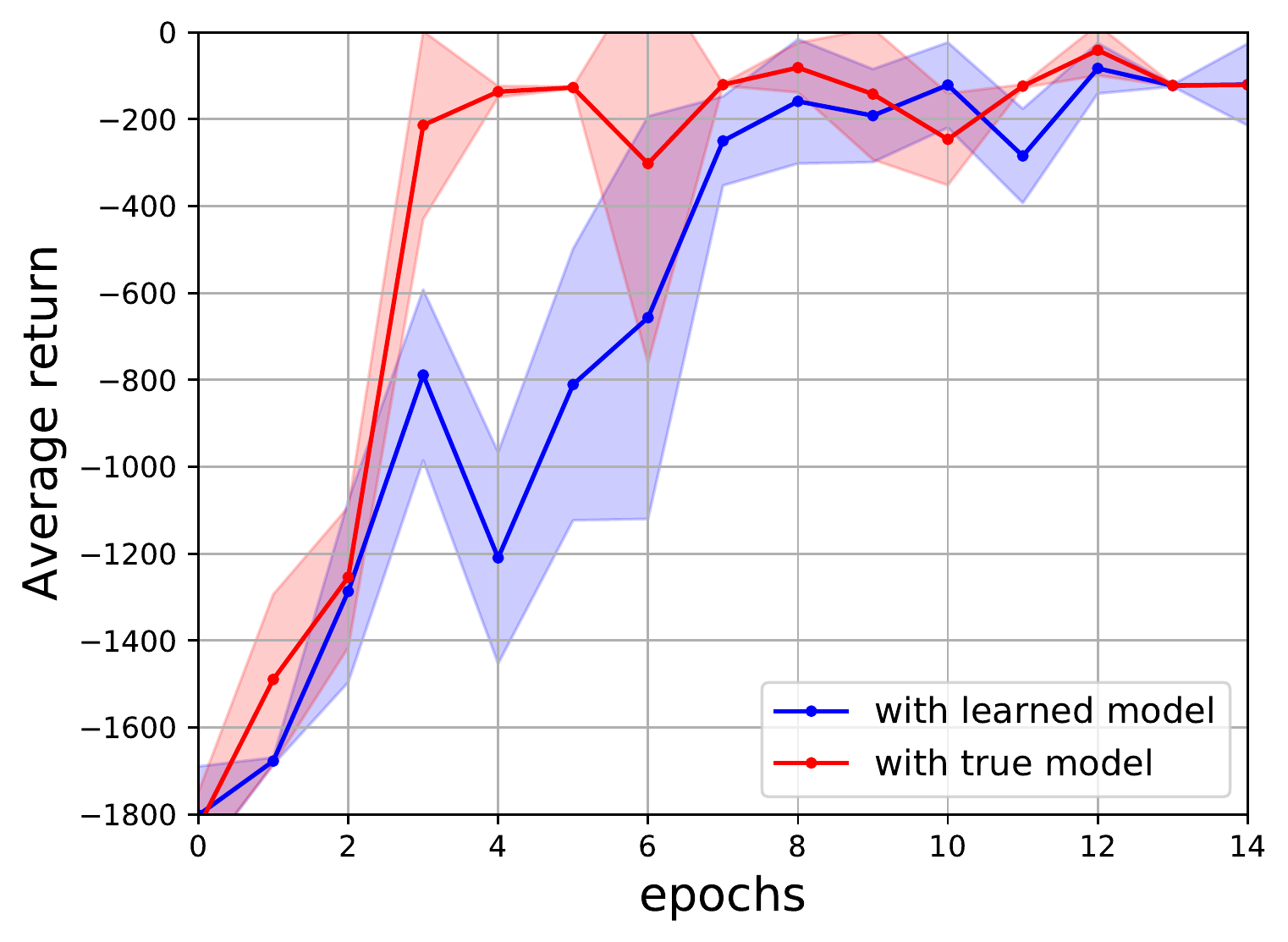}
    \caption{MEMB with the true model vs  MEMB with the learned model in pendulum}
    \label{fig:true_vs_learn}
\end{figure}

\subsection{Asymptotic Performance}

We evaluate the asymptotic behavior of model-free RL (particularly SAC) and MEMB agents through 2000 epoch of training (20M steps) on four simulation environments to see the asymptotic result.
The SAC agent achieves $9453.32\pm 219.34 $ in HalfCheetah, $4182.07\pm63.23$ in Walker2d,  $3433.03\pm18.05$  in Hopper, $310.3 \pm4.60$ in Swimmer.
The MEMB agent achieves $8813.14\pm53.04$ in HalfCheetah,  $3544.3\pm 252.3$ in Walker2d,  $3265\pm12.65$ in Hopper, $307.57\pm 8.97$ in Swimmer. In general, there is a small gap between the asymptotic performance between MBRL and MFRL. It is well expected since the learned model is not accurate.

\section{Environment Overview and Hyperparameter Setting}\label{appendix:experiment}
In this section, we provide an overview of simulation environment in Table \ref{table:environment}. The hyperparameter setting for each environment is shown in Table \ref{table:hyper}. 

\begin{table}[h]
  \begin{center}
    \begin{tabular}{ |c|c|c|c|  }
      \hline
      Environment Name & Observation Space Dimension  & Action Space Dimension & Horizon\\
      \hline
      Pendulum & 3 & 1 & 200\\
      \hline
      InvertedPendulum & 4 & 1 & 1000\\
      \hline
      HalfCheetah & 17 & 6 & 1000 \\
      \hline
      Hopper & 11 & 3 & 1000 \\
      \hline
      Walker2D & 17 & 6 & 1000 \\
      \hline
      Swimmer  &  8& 2 & 1000 \\
      \hline
      Reacher &  11&  2& 50 \\
      \hline
    \end{tabular}
    \caption{The observation space dimension, action space dimension, and horizon for each simulation environment implemented in the experiment and ablation study. }
    \label{table:environment}
  \end{center}
\end{table}

\begin{table}[h]
\footnotesize
  \begin{center}
    \begin{tabular}{ |c|c|c|c|c|c|c|c|  }
      \hline
        & Pendulum & InvertedPendulum & HalfCheetah & Hopper & Walker2D & Swimmer & Reacher\\
      \hline
      Epoch & 50 & 40 & \multicolumn{2}{c|}{200}& 300 & 180  & 50\\
      \hline
      \tabincell{c}{Policy Learning Rate} & \multicolumn{7}{c|}{0.0003} \\
      \hline
      \tabincell{c}{ Value Learning Rate} & \multicolumn{2}{c|}{0.0003} & 0.001 & 0.001 & 0.0003& 0.0003 & 0.0003\\
      \hline
      \tabincell{c}{Model\\ Learning Rate} & \multicolumn{4}{c|}{0.0003} & 0.0001& 0.0001 & 0.0001\\
      \hline
      \tabincell{c}{Alpha value \\ (in entropy term)} & 0.2 & 0.1 & \multicolumn{3}{c|}{0.4} & 0.2 & 0.2\\
      \hline
      \tabincell{c}{environment steps \\ per epoch} & \multicolumn{6}{c|}{1000} & 50\\
      \hline
      \tabincell{c}{Value and Policy\\ Network Architecture} & \multicolumn{7}{c|}{(256,256)}\\
      \hline
      \tabincell{c}{Model \\Network Architecture} & \multicolumn{2}{c|}{(32,16)} & \multicolumn{2}{c|}{(256,128)} & (256,256)& \multicolumn{2}{c|}{(256,128)}\\
      \hline
      \tabincell{c}{Train Actor-critic \\ Times ($m$)} & 5 &1 & \multicolumn{2}{c|}{5} & 3 & 5 &  5 \\
      \hline
    \end{tabular}
    \caption{The hyper-parameter used in training MEMB algorithm for each simulation environment. The number in policy, value, and model network architecture indicate the size of hidden units in each layer of MLP. The ReLu activation function is implemented in all architecture. }
    \label{table:hyper}
  \end{center}
\end{table}

	\section{Proof}
	In this section, we give the proof of the theorem in the main paper. To start with, we give the definition of the Wasserstein distance and its dual form, since we will use it frequently in the following discussion.
	
	Definition: Give a meritc space $(M,d)$ and the set $\mathbb{P}(M)$ of probability measures on $M$, the Wasserstein metric between two probability distributions $\mu_1$ and $\mu_2$ in $\mathbb{P}(M)$ is defined as 
	
	\begin{equation}
	W(\mu_1,\mu_2):=\inf_{j\in \Sigma} \int\int p(x,y)d(x,y)dxdy,
	\end{equation}
	where $\Sigma$ denotes the collection of all joint distributions $p$ with marginal $\mu_1$ and $\mu_2$.
	
	The dual presentation is a special case of the duality theorem of Kantorovich and Rubinstein \cite{villani2010optimal}.
	
	\begin{equation}
	W(\mu_1,\mu_2)=\sup_ {\|f\|\leq 1} \int f(s)(\mu_1(s)- \mu_2(s))ds 
	\end{equation}
	where $\|f\|\leq 1$ means the function $f$ is $1$-Lipschitz.

	The first lemma is well known. It says the Lipschitz constant of a composition function is the product of Lipschitz constants of two functions.
	\begin{lemma}\label{lemma:Lipschitz}
		Define three metric spaces $(M_1,d_1),(M_2,d_2),(M_3,d_3)$. Define Lipschitz function $f:M_2 \rightarrow M_3$ and $g: M_1\rightarrow M_2$ with constant $K_f$, $K_g$. Then $h: f\circ g$ is Lipschitz with constant $K_h\leq K_f K_g$
	\end{lemma}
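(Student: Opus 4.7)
The plan is to unwind the definition of Lipschitz continuity and apply the two given inequalities sequentially. First I would fix arbitrary points $x_1, x_2 \in M_1$ and aim to bound $d_3(h(x_1), h(x_2))$ in terms of $d_1(x_1, x_2)$.

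The main computation is a two-step chain. By definition of $h = f \circ g$, we have $d_3(h(x_1), h(x_2)) = d_3(f(g(x_1)), f(g(x_2)))$. Applying the Lipschitz property of $f$ (between $(M_2, d_2)$ and $(M_3, d_3)$) with constant $K_f$ gives $d_3(f(g(x_1)), f(g(x_2))) \leq K_f\, d_2(g(x_1), g(x_2))$. A second application, this time the Lipschitz property of $g$ (between $(M_1, d_1)$ and $(M_2, d_2)$) with constant $K_g$, gives $d_2(g(x_1), g(x_2)) \leq K_g\, d_1(x_1, x_2)$. Chaining the two bounds yields $d_3(h(x_1), h(x_2)) \leq K_f K_g\, d_1(x_1, x_2)$, which is exactly the statement that $h$ is Lipschitz with constant at most $K_f K_g$. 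Taking the infimum over all admissible Lipschitz constants of $h$ then gives $K_h \leq K_f K_g$.

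There is essentially no obstacle here: the proof is a single line of bookkeeping once the definitions are spelled out. The inequality is in general not tight (hence $K_h \leq K_f K_g$ rather than equality), since $g$ need not map into directions on which $f$ attains its Lipschitz bound, but for the lemma we only require the upper bound. This lemma will subsequently serve as the workhorse in the proofs of Theorems \ref{theorem:general_result} and \ref{theorem:branched_result}, where iterated composition of the re-parameterized policy map $\kappa$ and the dynamics map $f_m$ produces powers of $\bar{K} = K_\pi K_m$, and each such power arises from one invocation of this composition bound along the rollout.
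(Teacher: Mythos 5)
Your proof is correct and follows essentially the same argument as the paper: the paper writes $K_h$ as a supremum of ratios and factors each ratio into the product of the corresponding ratios for $g$ and $f$, which is just the supremum-form of your pointwise chaining of the two Lipschitz inequalities. If anything, your version is slightly cleaner, since it avoids the paper's implicit division by $d_2(g(x_1),g(x_2))$, which is undefined when $g(x_1)=g(x_2)$.
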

	\begin{proof}
		\begin{equation}
		\begin{split}
		K_h=&\sup_{x_1,x_2} d_3\big(f(g(x_1),f(g(x_2)))\big)/d(x_1,x_2)\\
		=&\sup_{x_1,x_2}\frac{d_2 (g(x_1),g(x_2))}{d_1(x_1,x_2)}\frac{d_3\big(f(g(x_1),f(g(x_2)) \big)}{d_2(g(x_1),g(x_2))}\\
		\leq &  \sup_{x_1,x_2}\frac{d_2 (g(x_1),g(x_2))}{d_1(x_1,x_2)} \sup_{y_1,y_2}\frac{d_3(y_1,y_2)}{d_2(y_1,y_2)}\leq K_g K_f
		\end{split}
		\end{equation}
	\end{proof}
	\begin{lemma}\label{lemma:distance_joint}
		Suppose we have two joint distribution $p_1(s,a)=p_1(s)\pi_1(a|s)$, $p_2(s,a)=p_2(s)\pi_2(a|s)$. We further assume that $W(p(s_1),p(s_2))\leq \epsilon_m$ and $W(\pi_1(a|s),\pi_2(a|s))\leq \epsilon_\pi$. Then we have 
		$W(p_1(s,a),p_2(s,a))\leq \epsilon_\pi+K_\pi\epsilon_m$.
	\end{lemma}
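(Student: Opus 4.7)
The plan is to bound $W(p_1(s,a),p_2(s,a))$ by testing against $1$-Lipschitz functions via the Kantorovich--Rubinstein duality, then to telescope the difference of joint densities into a ``marginal-shift'' part and a ``conditional-shift'' part. Concretely, writing
\begin{equation*}
p_1(s)\pi_1(a|s) - p_2(s)\pi_2(a|s) = \pi_1(a|s)[p_1(s)-p_2(s)] + p_2(s)[\pi_1(a|s)-\pi_2(a|s)],
\end{equation*}
the estimate splits into two pieces that can be handled separately against any fixed $1$-Lipschitz test function $f(s,a)$.

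For the conditional-shift piece I would observe that, for each fixed $s$, the slice $a\mapsto f(s,a)$ is $1$-Lipschitz in $a$. Hence Kantorovich--Rubinstein duality applied in the action variable gives
\begin{equation*}
\int f(s,a)[\pi_1(a|s)-\pi_2(a|s)]\,da \;\leq\; W(\pi_1(\cdot|s), \pi_2(\cdot|s)) \;\leq\; \epsilon_\pi,
\end{equation*}
and integrating against the probability density $p_2(s)$ preserves this bound, yielding $\epsilon_\pi$ for this piece.

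For the marginal-shift piece, set $g(s) := \int f(s,a)\pi_1(a|s)\,da$; the goal is to show $g$ is $K_\pi$-Lipschitz, after which a second application of Kantorovich--Rubinstein duality gives the bound $K_\pi\epsilon_m$. To obtain the Lipschitz constant I would use the Lipschitz-class decomposition $\pi_1(a|s)=\sum_{f_\pi}\mathbbm{1}(f_\pi(s)=a)\,g_\pi(f_\pi)$ introduced in Section~4, which rewrites $g(s)=\sum_{f_\pi}g_\pi(f_\pi)\,f(s,f_\pi(s))$. Each map $s\mapsto f(s,f_\pi(s))$ is the composition of the $K_\pi$-Lipschitz map $s\mapsto(s,f_\pi(s))$ with the $1$-Lipschitz $f$, so Lemma~\ref{lemma:Lipschitz} yields a Lipschitz constant of $K_\pi$ under the product metric implicit in the paper, and averaging over $f_\pi$ against the probability mass $g_\pi$ keeps the constant $K_\pi$.

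The main obstacle is precisely this Lipschitz bound on $g$: it is a clean computation only once one commits to a choice of product metric on $\mathcal{S}\times\mathcal{A}$ that absorbs the trivial $1$-Lipschitzness of the state-coordinate projection (e.g.\ the sup metric, or treating the regime $K_\pi\geq 1$). Without such a convention one inherits an additive contribution of $1$, giving $(1+K_\pi)\epsilon_m$ instead; the stated form $K_\pi\epsilon_m$ is what the sup-metric convention delivers. Taking the supremum over $1$-Lipschitz $f$ in the dual then recombines the two pieces into the claimed $\epsilon_\pi+K_\pi\epsilon_m$.
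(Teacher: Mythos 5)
Your proof is correct and follows essentially the same route as the paper's: the telescoping of the joint densities is exactly the triangle inequality through an intermediate product measure (you use $p_2(s)\pi_1(a|s)$ where the paper uses $p_1(s)\pi_2(a|s)$, an immaterial symmetry), and both arguments then use the Kantorovich--Rubinstein dual slice-wise in $a$ for the $\epsilon_\pi$ term and the Lipschitz-class decomposition of the policy for the $K_\pi\epsilon_m$ term. Your caveat about the product metric on $\mathcal{S}\times\mathcal{A}$ (sup metric, or the regime $K_\pi\geq 1$, to get $K_\pi$ rather than $1+K_\pi$) is a fair point that the paper itself glosses over when it asserts $f(s,f_\pi(s))/K_\pi$ is $1$-Lipschitz.
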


	\begin{proof}

		Using the triangle inequality, we have
		$$W(p_1(s,a),p_2(s,a))\leq W(p(s_1)\pi_1(a|s),p_1(s)\pi_2(a|s))+W(p_1(s)\pi_2(a|s),p_2(s)\pi_2(a|s)).$$
		
		Now we bound the first term and second term respectively.
		For the first term, according to the dual form of the Wasserstein distance, we have
		\begin{equation}
		\begin{split}
		&W(p_1(s)\pi_1(a|s), p_1(s)\pi_2(a|s) )= \sup_{\|f\|\leq 1} \int\int f(s,a)p_1(s) (\pi_1(a|s)-\pi_2(a|s))dads\\.
		\end{split}
		\end{equation}
		Notice it is easy to verify that if $f(s,a)$ is a 1-Lipschitz function  w.r.t. $(s,a)$, then for a fixed $a$, $f(s,a)$ (we denote it as $f_s(a)$) is also a 1-Lipschitz function w.r.t $a$. Thus $\forall f \in \{f: \|f\|\leq 1 \}$, we have

		\begin{equation}
		\begin{split}
		&\int\int f(s,a) p_1(s)(\pi_1(a|s)-\pi_2(a|s)  )da ds\\
		=& \int p_1(s)\int f_s(a)(\pi_1(a|s)-\pi_2(a|s))da ds\\
		\leq &\int p_1(s) W(\pi_1(a|s),\pi_2(a|s))ds  = \epsilon_{\pi}.
		\end{split}
		\end{equation}
		
		We then bound the second term in the following way.
		\begin{equation}
		\begin{split}
		W(p_1(s)\pi_2(a|s),p_2(s)\pi_2(a|s))=&\sup_{\|f\|\leq 1} \int\int f(s,a)(p_1(s)-p_2(s))\pi_2(a|s)dads\\
		\overset{(1)}{=}&\sup_{\|f\|\leq 1} \int \int f(s,a) \sum g_{\pi}(f_\pi) \mathbbm{1}(f_\pi(s)=a) \big( p_1(s)-p_2(s)\big)dsda\\
		=&\sup_{\|f\|\leq 1} \int \sum_{f_\pi} g_\pi(f_\pi) f(s,f_\pi(s)) (p_1(s)-p_2(s)) ds\\
		\leq& \sum_{f_\pi} g_\pi(f_\pi) \sup_{\|f\|\leq 1} \int f(s,f_\pi(s))(p_1(s)-p_2(s))ds\\
		=&\sum_{f_\pi} g_\pi(f_\pi)K_{\pi}\sup_{\|f\|\leq 1}\int \frac{f(s,f_\pi(s))}{K_\pi } (p_1(s)-p_2(s))ds\\
		\overset{(2)}{=}&\sum_{f_\pi} g_\pi(f_\pi)K_\pi W(p_1(s),p_2(s))\\
		\leq& K_{\pi} \epsilon_m 
		\end{split}
		\end{equation}
		where (1) holds using the assumption $\pi$ is in the Lipschitz class. (2) uses the fact that $\frac{f(s,f_\pi(s))}{K_\pi}$ is 1 Lipschitz, which holds using the similar argument in Lemma \ref{lemma:Lipschitz}.
		
		Combine two pieces together, we obtain the result.
	\end{proof}
	
	\begin{lemma}\label{lemma:distance_state}
		Define $p_{1,\pi_1}(s'|s)=\int p_1(s'|s,a)\pi_1(a|s)da$ and $p_{2,\pi_2}(s'|s)=\int p_2(s'|s,a)\pi_2(a|s) da$. Suppose $W(p_{1}(s'|s,a),p_2(s'|s,a)  )\leq \epsilon_m$, $W(\pi_1(a|s), W(\pi_2(a|s)))\leq \epsilon_\pi$, then we have 
		$$ W(p_{1,\pi_1}(s'|s),p_{2,\pi_2}(s'|s))\leq K_m\epsilon_\pi +\epsilon_m $$.
	\end{lemma}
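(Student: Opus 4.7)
The plan is to adapt the triangle-inequality decomposition of Lemma \ref{lemma:distance_joint} to the conditional next-state setting. First I would write
$$W(p_{1,\pi_1}(s'|s),p_{2,\pi_2}(s'|s)) \leq W(p_{1,\pi_1}(s'|s),p_{1,\pi_2}(s'|s)) + W(p_{1,\pi_2}(s'|s),p_{2,\pi_2}(s'|s)),$$
which isolates the contribution of the policy shift $\epsilon_\pi$ in the first term and of the model shift $\epsilon_m$ in the second.

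For the second term the policies coincide, so I would expand the Kantorovich--Rubinstein dual as a supremum over $1$-Lipschitz $f$ of $\int \pi_2(a|s)\big[\int f(s')(p_1(s'|s,a)-p_2(s'|s,a))ds'\big]da$. The inner bracket is bounded by $W(p_1(\cdot|s,a),p_2(\cdot|s,a)) \leq \epsilon_m$ for every $a$, and since $\pi_2(\cdot|s)$ is a probability density this whole piece is at most $\epsilon_m$.

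The main obstacle is the first term, where the factor $K_m$ must be extracted. My plan is to reuse the Lipschitz-class representation $p_1(s'|s,a)=\sum_{f_m}\mathbbm{1}(f_m(s,a)=s')\,g_m(f_m)$ exactly as in the proof of Lemma \ref{lemma:distance_joint}, yielding
$$\int f(s')\,p_1(s'|s,a)\,ds' \;=\; \sum_{f_m} g_m(f_m)\,f\!\big(f_m(s,a)\big).$$
By Lemma \ref{lemma:Lipschitz}, each $a\mapsto f(f_m(s,a))$ is $K_m$-Lipschitz, and a convex combination preserves this property. After rewriting the first Wasserstein term as $\sup_{\|f\|\le 1}\int(\pi_1(a|s)-\pi_2(a|s))\big[\int f(s')p_1(s'|s,a)ds'\big]da$, pulling out $K_m$ and feeding the rescaled $1$-Lipschitz test function into the dual formulation of $W(\pi_1(\cdot|s),\pi_2(\cdot|s))$ produces the bound $K_m\epsilon_\pi$. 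Summing the two estimates gives $K_m\epsilon_\pi+\epsilon_m$, as required. The only delicate point is to confirm that the joint $1$-Lipschitz test function over $s'$ restricts properly when composed with $f_m(s,\cdot)$, which is the same subtlety already resolved in Lemma \ref{lemma:distance_joint}.
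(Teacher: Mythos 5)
Your proposal is correct and follows essentially the same route as the paper's proof: the same triangle-inequality split through the reference distribution $p_{1,\pi_2}(s'|s)$, the same Lipschitz-class expansion of $p_1$ to extract the factor $K_m$ for the policy-shift term, and the same direct dual-form bound of $\epsilon_m$ for the model-shift term. The only cosmetic difference is that you argue the mixture $\sum_{f_m} g_m(f_m) f(f_m(s,a))$ is $K_m$-Lipschitz as a convex combination, while the paper pulls the sum outside the supremum; both are valid and equivalent here.
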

	
	\begin{proof}
		We define a reference probability distribution $p_{1,\pi_2}(s'|s)=\int p_{1}(s'|s,a)\pi_2(a|s)da$.
		Using the triangle inequaity, we have
		$$W( p_{1,\pi_1}(s'|s)),p_{2,\pi_2}(s'|s))\leq W(p_{1,\pi_1}(s'|s), p_{1,\pi_2}(s'|s))+W(p_{1,\pi_2}(s'|s),p_{2,\pi_2}(s'|s)  ).$$
		
		Thus we just need to bound the two terms on the right hand side.
		
		For the first term, according to the definition of the Wasserstein distance, we have
		\begin{equation}
		\begin{split}
		&W(p_{1,\pi_1}(s'|s),p_{1,\pi_2}(s'|s)  )\\
		\leq& \sup_{\|b\|\leq 1} \int \big(\int p_1(s'|s,a)\pi_1(a|s)-p_1(s'|s,a)\pi_2(a|s)da \big)b(s')ds'da\\
		\overset{(1)}{=}& \sup_{\|b\|\leq 1}\int \int \sum_{f_m} g_m(f_m) (\pi_1(a|s)-\pi_2(a|s)) \mathbbm{1}(f_m(a,s)=s')b(s')ds'da \\
		=& \sup_{\|b\|\leq 1}\sum_{f_m}g_m(f_m)\int (\pi_1(a|s)-\pi_2(a|s)) b(f_m(s,a))da\\
		\leq& \sum_{f_m} g_m(f_m) K_m\sup_{\|b\|\leq 1} \int (\pi_1(a|s)-\pi_2(a|s)) \frac{b(f_m(a,s))}{K_m}da\\
		\overset{(2)}{\leq} & \sum_{f_m}g_m(f_m)K_m W(\pi_1(a|s),\pi_2(a|s))\\
		\leq & K_m \epsilon_\pi.
		\end{split}
		\end{equation}
		where (1) holds using the assumption that the transition model is Lipschitz. (2) holds from the fact that $b(f_m(a,s))/K_m$ is 1-Lipschitz w.r.t. $a$. 
		Then we bound the second term. Again according to the definition of the Wasserstein distance, we have
		\begin{equation}
		\begin{split}
		&W(p_{1,\pi_2}(s'|s),p_{2,\pi_2}(s'|s))\\
		=& \sup_{\|f\|\leq 1} \int \int (p_1(s'|s,a)-p_2(s'|s,a))\pi_2(a|s)f(s')dads'\\
		\leq & \int \pi_2(a|s) \sup_{\|f\|\leq 1}\int (p_1(s'|s,a)-p_2(s'|s,a))f(s')ds'da\\
		\leq & \int \pi_2(a|s) \epsilon_m da\\
		= & \epsilon_m
		\end{split}
		\end{equation}
		
		combine above two pieces together, we obtain the result.
	\end{proof}
	In the next lemma, we would like to bound the Wasserstein distance between distribution $ W(p_{1,\pi_1}^n(s'|s_0),p_{2,\pi_2}^n(s'|s_0))  $, where  $s_0$ is the initial state,\\ 
	$ p_{1,\pi_1}^n(s'|s_0)=\int \int p_{1,\pi_1}^{n-1}(s|s_0)\pi(a|s)p_1(s'|s,a) dads$. 
	\begin{lemma}\label{lemma: distance_n_step}
		Denote $\Delta= \epsilon_m+K_m\epsilon_\pi$ and $\bar{K}=K_m K_\pi$.  Then $W(p^n_{1,\pi_1} (s'|s_0), p^n_{2,\pi_2}(s'|s_0))\leq \Delta \sum_{i=0}^{n-1}\bar{K}^i =\Delta\frac{1-\bar{K}^n}{1-\bar{K}}$
	\end{lemma}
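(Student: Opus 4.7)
The plan is induction on $n$. The base case $n=1$ is precisely Lemma \ref{lemma:distance_state}, which gives $W(p_{1,\pi_1}(s'|s_0),p_{2,\pi_2}(s'|s_0))\leq \epsilon_m+K_m\epsilon_\pi=\Delta$, matching the claim since $\sum_{i=0}^{0}\bar K^i=1$.

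For the inductive step, write $p^n_{1,\pi_1}(s'|s_0)=\int p^{n-1}_{1,\pi_1}(s|s_0)\,p_{1,\pi_1}(s'|s)\,ds$ and similarly for the subscript $2$. Insert the reference mixture $\int p^{n-1}_{1,\pi_1}(s|s_0)\,p_{2,\pi_2}(s'|s)\,ds$ and use the triangle inequality to split the target into two pieces. The first piece, in which the outer distribution is fixed to $p^{n-1}_{1,\pi_1}(\cdot|s_0)$ and only the one-step kernel changes from $p_{1,\pi_1}$ to $p_{2,\pi_2}$, is bounded by a straight application of the dual form: for any $1$-Lipschitz $f$, conditioning on the outer $s$ gives an integrand dominated by $W(p_{1,\pi_1}(\cdot|s),p_{2,\pi_2}(\cdot|s))\leq \Delta$ using Lemma \ref{lemma:distance_state}, and averaging over $s$ preserves the bound. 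This contributes $\Delta$.

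The main obstacle is the second piece, where the kernel $p_{2,\pi_2}(s'|s)$ is fixed and the outer $(n{-}1)$-step distribution changes. I plan to show that the map $s\mapsto p_{2,\pi_2}(\cdot|s)$ is $\bar K$-Lipschitz in Wasserstein distance, i.e.\ for any $1$-Lipschitz $f$ in $s'$, the function $g(s):=\int f(s')p_{2,\pi_2}(s'|s)ds'$ is $\bar K$-Lipschitz in $s$. The argument reuses the two tricks from Lemma \ref{lemma:distance_joint} and Lemma \ref{lemma:distance_state}: expand the kernel via its Lipschitz-class representation $p_2(s'|s,a)=\sum_{f_m}g_m(f_m)\mathbbm{1}(f_m(s,a)=s')$ and $\pi_2(a|s)=\sum_{f_\pi}g_\pi(f_\pi)\mathbbm{1}(f_\pi(s)=a)$ to write $g(s)=\sum_{f_m,f_\pi}g_m g_\pi\, f(f_m(s,f_\pi(s)))$; then Lemma \ref{lemma:Lipschitz} implies the composition $s\mapsto f_m(s,f_\pi(s))$ has Lipschitz constant at most $K_m K_\pi=\bar K$, and a convex combination of $\bar K$-Lipschitz functions is still $\bar K$-Lipschitz. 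Thus the second piece, by the dual form of $W$, is at most $\bar K\cdot W(p^{n-1}_{1,\pi_1}(s|s_0),p^{n-1}_{2,\pi_2}(s|s_0))$, which by the inductive hypothesis is $\leq \bar K\,\Delta\sum_{i=0}^{n-2}\bar K^i$.

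Summing the two contributions yields $\Delta+\bar K\,\Delta\sum_{i=0}^{n-2}\bar K^i=\Delta\sum_{i=0}^{n-1}\bar K^i=\Delta\,\tfrac{1-\bar K^n}{1-\bar K}$ (the geometric series identity uses $\bar K<1$, consistent with the assumption in Theorems \ref{theorem:general_result} and \ref{theorem:branched_result}), which closes the induction. The Lipschitz-of-the-averaged-kernel step is the only nontrivial ingredient; everything else is triangle inequality, duality, and geometric summation.
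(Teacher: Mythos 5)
Your proof is correct and takes essentially the same route as the paper's: induction yielding the recursion $W(p^n_{1,\pi_1},p^n_{2,\pi_2})\le \bar K\,W(p^{n-1}_{1,\pi_1},p^{n-1}_{2,\pi_2})+\Delta$, obtained by inserting a hybrid one-step distribution, bounding the kernel-change piece by $\Delta$ via the argument of Lemma \ref{lemma:distance_state}, and bounding the initial-distribution-change piece by showing (via the Lipschitz-class representation and the dual form) that the averaged kernel $s\mapsto\int f(s')p(s'|s,a)\pi(a|s)\,da\,ds'$ is $\bar K$-Lipschitz. The only, immaterial, difference is the mirror-image choice of hybrid: you apply the kernel $p_{2,\pi_2}$ to $p^{n-1}_{1,\pi_1}(\cdot|s_0)$, whereas the paper applies $p_{1,\pi_1}$ to $p^{n-1}_{2,\pi_2}(\cdot|s_0)$.
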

	
	\begin{proof}
		We prove the result by induction. Denote $\delta(n)=W(p^n_{1,\pi_1} (s'|s_0), p^n_{2,\pi_2}(s'|s_0)).$ Thus $\delta(1)=W(p_{1,\pi_1}(s'|s_0),p_{2,\pi_2}(s'|s_0) )$. Using lemma \ref{lemma:distance_state}, we have $\delta(1)\leq \epsilon_m+k_m\epsilon_\pi=\Delta$.
		Using the triangle inequality, we obtain
		\begin{equation}
		\begin{split}
		\delta(n)=&W(p_{1,\pi_1}^n (s'|s_0),p_{2,\pi_2}^n(s'|s_0) )\\
		\leq & W(p_{1,\pi_1}^n(s'|s_0), p_{1,\pi_1}(s'|p_{2,\pi_2}^{n-1} (\cdot|s_0))  ) + W(p_{1,\pi_1}(s'| p_{2,\pi_2}^{n-1}(\cdot|s_0)), P_{2,\pi_2} (s'|p_{2,\pi_2}^{n-1}(\cdot|s_0)))\\
		=& W(p_{1,\pi_1}^n(s'|p^{n-1}_{1,\pi_1}(\cdot|s_0)), p_{1,\pi_1}(s'|p_{2,\pi_2}^{n-1} (\cdot|s_0))  ) + W(p_{1,\pi_1}(s'| p_{2,\pi_2}^{n-1}(\cdot|s_0)), P_{2,\pi_2} (s'|p_{2,\pi_2}^{n-1}(\cdot|s_0))).
		\end{split}
		\end{equation}
		We bound two terms on the right hand side respectively. For the first term,
		we denote $\mu_1:=p^{n-1}_{1,\pi_1}(\cdot|s_0)$ and $\mu_2:=p_{2,\pi_2}^{n-1}(\cdot|s_0)  $ for short. 
		
		Thus, we need to bound $ W(p_{1,\pi_1}^n(s'|\mu_1), p^n_{1,\pi_1}(s'|\mu_2)).$ According to the definition of the Wasserstein distance, we have
		\begin{equation}
		\begin{split}
		&W(p_{1,\pi_1}^n(s'|\mu_1), p^n_{1,\pi_1}(s'|\mu_2))\\
		=& \sup_{\|f\|\leq 1} \int ( p_{1,\pi_1}(s'|\mu_1)-p_{1,\pi_1}(s'|\mu_2) )f(s')ds'\\
		=&\sup_{\|f\|\leq 1} \int \int\int p_{1}(s'|s,a)\pi(a|s)(u_1(s)-u_2(s)) f(s')ds'dads\\
		=& \sup_{\|f\|\leq 1} \int \int \int p_{1}(s'|s,a)\sum_{f_\pi} g_\pi(f_\pi)\mathbbm{1}(f_\pi(s)=a) (u_1(s)-u_2(s)) f(s')ds'dads\\
		=&\sup_{\|f\|\leq 1} \int \int \sum_{f_\pi} g_\pi(f_\pi) p_1(s'|s,f_\pi(s))(\mu_1(s)-\mu_2(s))f(s')ds'ds\\
		=&\sup_{\|f\|\leq 1}\int \int \sum_{f_\pi} g_\pi(f_\pi)\sum_{f_m} g_{m}(f_m)(\mu_1(s)-\mu_2(s)) \mathbbm{1}(f_m(s,f_\pi(s)))f(s')ds'ds\\
		\leq & \sum_{f_\pi} g_{\pi}(f_\pi) \sum_{f_m} g_m (f_m) \sup_{\|f\|\leq 1} \int \int (\mu_1(s)-\mu_2(s)) f(f_m(s,f_\pi(s)))ds\\
		=& \sum_{f_\pi} g_{\pi}(f_\pi) \sum_{f_m} g_m (f_m) K_\pi K_m\sup_{\|f\|\leq 1} \int \int (\mu_1(s)-\mu_2(s)) f(f_m(s,f_\pi(s)))/(K_\pi K_m)ds\\
		=& K_\pi K_m W(\mu_1,\mu_2)\\
		=& \bar{K} \delta(n-1)
		\end{split}
		\end{equation}
		Then we bound the second term $W(p_{1,\pi_1}(\cdot|\mu_2),p_{2,\pi_2}(\cdot| \mu_2)).$ Using the same argument with Lemma \ref{lemma:distance_state}, we have 
		$W(p_{1,\pi_1(\cdot|\mu_2)},p_{2,\pi_2}(\cdot| \mu_2))\leq \epsilon_m+K_m\epsilon_\pi$. We denote $ \Delta:=\epsilon_m+K_m\epsilon_\pi$.
		Combine all pieces together, we have
		\begin{equation}\label{equ:induction_step}
		\delta(n)= \bar{K}\delta(n-1)+\Delta
		\end{equation}
		
		Suppose $\delta(n-1)\leq \delta \sum_{i=0}^{n-2} \bar{K}^i$, we have
		$ \delta(n)\leq \bar{K} \delta(n-1)+\Delta\leq \Delta \sum_{i=0}^{n-1}\bar{K}^i.$
	\end{proof}

	In the next lemma, we bound the long term reward with different model and policy. 
	
	\begin{lemma}\label{lemma:long_term_general}
		let $\eta_1$ be the long term reward induced by the policy $\pi_1(a|s)$ and model $p_1(s'|s,a)$.  $\eta_2$ is the long term reward induced by the policy $\pi_2(a|s)$ and model $p_2(s'|s,a)$. Suppose $r(s,a)$ a $K_r$-Lipschitz function w.r.t $(s,a)$. Then we have $|\eta_1-\eta_2|\leq K_r(\frac{\gamma K_\pi (\epsilon_m+K_m\epsilon_\pi)}{(1-\gamma)(1-\gamma\bar{K})}+\frac{1}{1-\gamma}\epsilon_\pi )
		$
	\end{lemma}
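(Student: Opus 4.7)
The plan is to write $|\eta_1-\eta_2|$ as a discounted sum of differences of expected rewards under the two state–action occupancy measures at each time $t$, then bound each term via the dual characterization of the Wasserstein distance, and finally use the earlier lemmas to control the per-step Wasserstein distance.

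First, I would expand
\begin{equation*}
\eta_i \;=\; \sum_{t=0}^{\infty}\gamma^t \,\mathbb{E}_{(s,a)\sim d_{i,t}}[r(s,a)],
\qquad d_{i,t}(s,a):=p_{i,\pi_i}^{t}(s\mid s_0)\,\pi_i(a\mid s),
\end{equation*}
so that $|\eta_1-\eta_2|\le\sum_{t=0}^{\infty}\gamma^t\,|\mathbb{E}_{d_{1,t}}r-\mathbb{E}_{d_{2,t}}r|$. Since $r$ is $K_r$-Lipschitz jointly in $(s,a)$, the Kantorovich–Rubinstein duality gives $|\mathbb{E}_{d_{1,t}}r-\mathbb{E}_{d_{2,t}}r|\le K_r\,W(d_{1,t},d_{2,t})$, so it suffices to bound the Wasserstein distance between the two occupancy measures at each step.

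Next I would apply Lemma \ref{lemma:distance_joint} with the roles $p_i(s)=p_{i,\pi_i}^{t}(s\mid s_0)$ and action kernels $\pi_i(a\mid s)$, obtaining
\begin{equation*}
W(d_{1,t},d_{2,t})\;\le\;\epsilon_\pi+K_\pi\,W\!\bigl(p_{1,\pi_1}^{t}(\cdot\mid s_0),\,p_{2,\pi_2}^{t}(\cdot\mid s_0)\bigr).
\end{equation*}
Then Lemma \ref{lemma: distance_n_step} furnishes
\begin{equation*}
W\!\bigl(p_{1,\pi_1}^{t}(\cdot\mid s_0),p_{2,\pi_2}^{t}(\cdot\mid s_0)\bigr)\;\le\;\Delta\,\tfrac{1-\bar{K}^{t}}{1-\bar{K}},\qquad \Delta:=\epsilon_m+K_m\epsilon_\pi.
\end{equation*}
For $t=0$ the state-marginal term vanishes (both equal $\delta_{s_0}$), which is consistent with the convention $\tfrac{1-\bar{K}^{0}}{1-\bar{K}}=0$.

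Finally I would plug these into the discounted sum and evaluate the two geometric series:
\begin{equation*}
|\eta_1-\eta_2|\;\le\;K_r\!\left[\frac{\epsilon_\pi}{1-\gamma}+\frac{K_\pi\Delta}{1-\bar{K}}\sum_{t=0}^{\infty}\gamma^{t}(1-\bar{K}^{t})\right],
\end{equation*}
and use $\sum_{t=0}^{\infty}\gamma^t(1-\bar{K}^t)=\tfrac{1}{1-\gamma}-\tfrac{1}{1-\gamma\bar{K}}=\tfrac{\gamma(1-\bar{K})}{(1-\gamma)(1-\gamma\bar{K})}$ to cancel the factor $(1-\bar{K})$ in the denominator. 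This produces exactly $K_r\bigl(\tfrac{\gamma K_\pi(\epsilon_m+K_m\epsilon_\pi)}{(1-\gamma)(1-\gamma\bar{K})}+\tfrac{\epsilon_\pi}{1-\gamma}\bigr)$, matching the claim.

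The only nontrivial step is the second one: verifying that Lemma \ref{lemma:distance_joint} is applicable at every time $t$, because it requires the conditional $\pi_i(a\mid s)$ to be Lipschitz in the sense defined earlier, and also uses $K_\pi$ as the Lipschitz constant of the action channel, not of the state marginal. Once that is granted, the remaining work is a routine combination of Lemma \ref{lemma: distance_n_step} and summation of a geometric series, both of which are standard. I do not foresee an obstacle beyond bookkeeping with the constants $\bar{K}$, $K_\pi$, $K_m$ and ensuring the algebraic identity $\tfrac{1}{1-\gamma}-\tfrac{1}{1-\gamma\bar{K}}=\tfrac{\gamma(1-\bar{K})}{(1-\gamma)(1-\gamma\bar{K})}$ is applied correctly so that the $(1-\bar{K})$ factor cancels cleanly.
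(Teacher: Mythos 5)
Your proposal is correct and follows essentially the same route as the paper's proof: both write $|\eta_1-\eta_2|$ as a discounted sum over time steps, bound each term by $K_r$ times the Wasserstein distance between the state--action occupancies via Kantorovich--Rubinstein duality, reduce to the state marginals using Lemma~\ref{lemma:distance_joint}, invoke Lemma~\ref{lemma: distance_n_step} for the bound $\Delta\frac{1-\bar{K}^t}{1-\bar{K}}$, and sum the geometric series. The only cosmetic difference is that the paper's step (2) nominally cites Lemma~\ref{lemma:distance_state} while actually using the inequality from Lemma~\ref{lemma:distance_joint}, exactly as you do.
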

	
	\begin{proof}
		\begin{equation}
		\begin{split}
		\eta_1-\eta_2=&\sum_{n=0}^{\infty} \int \gamma^n (p^n_{1,\pi_1} (s,a)-p^n_{2,\pi_2}(s,a))r(s,a) dsda\\
		=&K_r\sum_{n=0}^{\infty}\gamma^n  \int (p^n_{1,\pi_1} (s,a)-p^n_{2,\pi_2}(s,a))\frac{r(s,a)}{K_r} dsda \\
		\overset{(1)}{\leq}&K_r \sum_{n=0}^{\infty} \gamma^n W(p^n_{1,\pi_1}(s,a),p^n_{2,\pi_2}(s,a))\\
		\overset{(2)}{\leq}& K_r\sum_{n=0}^{\infty}\gamma^n(K_\pi \Delta\frac{1-\bar{K} ^n}{1-\bar{K}} +\epsilon_\pi)\\
		=&K_r\big( \frac{\gamma K_\pi \Delta}{(1-\gamma)(1-\gamma\bar{K})}+\frac{1}{1-\gamma} \epsilon_\pi    \big)\\
		=&K_r(\frac{\gamma K_\pi (\epsilon_m+K_m\epsilon_\pi)}{(1-\gamma)(1-\gamma\bar{K})}+\frac{1}{1-\gamma}\epsilon_\pi ),
		\end{split}
		\end{equation}
		
		where (1) uses the fact that $ r(s,a)/K_r$ is 1-Lipschitz,  (2) uses
		Lemma \ref{lemma:distance_state} and   we have $$W(p^n_{1,\pi_1}(s,a),p^n_{2,\pi_2}(s,a))\leq K_\pi W(p^n_{1,\pi_1}(s), p^n_{2,\pi_2}(s))+\epsilon_\pi.$$
		
		We can derive the same bound for $\eta_2-\eta_1$. Thus the lemma holds.
	\end{proof}
	
	\begin{lemma}\label{lemma:branch_m}
		Assume we run a branched rollout of length m. Before the branch we assume that the Wasserstein distance between $p_1^{pre}(s'|a,s), p_2^{pre} (s'|a,s)$ is bounded by $\epsilon^{pre}_m$ $W(p_1^{pre}(s'|a,s), p_2^{pre} (s'|a,s))\leq \epsilon_m^{pre} $ and similarly after the branch $ W(p_1^{post}(s'|a,s), p_2^{post} (s'|a,s))\leq \epsilon_m^{post}$. The policy difference (w.r.t the Wasserstein distance) is bounded by $\epsilon_\pi^{pre}$ and $\epsilon_\pi^{post}$ respectively . Then the m-step returns are bounded as 
		$|\eta_1-\eta_2|\leq K_r\frac{K_\pi \Delta_{post}(1-\gamma^m)}{(1-\bar{K})(1-\gamma)}-K_r\frac{K_\pi \Delta_{post}(1-(\gamma \bar{K})^m) }{(1-\gamma \bar{K})(1-\bar{K})}+K_r\frac{1-\gamma^m}{1-\gamma}\epsilon_{\pi,post} +\gamma^mK_r[K_\pi\Delta_{m,post}\frac{1}{1-\gamma\bar{K}}+K_\pi \Delta_{pre} \frac{\gamma}{(1-\gamma)(1-\bar{K}\gamma)}+\frac{\epsilon_{\pi,pre}}{1-\gamma}]$
	\end{lemma}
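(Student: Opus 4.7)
The plan is to split the infinite time horizon at the branch point $n=m$ and bound the contributions to $|\eta_1-\eta_2|$ separately on the two pieces, reusing the machinery already developed in Lemmas \ref{lemma:distance_joint}--\ref{lemma:long_term_general}. Concretely, I would write
\[
|\eta_1-\eta_2|\;\le\;K_r\sum_{n=0}^{m-1}\gamma^n\,W(\rho_1^n,\rho_2^n)\;+\;K_r\sum_{n=m}^{\infty}\gamma^n\,W(\rho_1^n,\rho_2^n),
\]
where $\rho_i^n(s,a)=\mu_i^n(s)\pi_i^n(a|s)$, upgrading $W$ on states to $W$ on state-action pairs via Lemma \ref{lemma:distance_joint}, and using (as in Lemma \ref{lemma:long_term_general}) the fact that $r/K_r$ is $1$-Lipschitz so that the Wasserstein distance controls the expected-reward gap.

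For the post-phase sum ($n=0,\ldots,m-1$) both rollouts evolve under post-branch dynamics/policy, so Lemma \ref{lemma: distance_n_step} applied with $\Delta_{post}:=\epsilon_m^{post}+K_m\epsilon_\pi^{post}$ gives $W(\mu_1^n,\mu_2^n)\le\Delta_{post}(1-\bar K^n)/(1-\bar K)$, and Lemma \ref{lemma:distance_joint} lifts this to $W(\rho_1^n,\rho_2^n)\le K_\pi\Delta_{post}(1-\bar K^n)/(1-\bar K)+\epsilon_\pi^{post}$. Multiplying by $K_r\gamma^n$ and summing from $0$ to $m-1$ produces exactly two geometric partial sums, $(1-\gamma^m)/(1-\gamma)$ and $(1-(\gamma\bar K)^m)/(1-\gamma\bar K)$, which reproduce the first three terms of the claimed bound.

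For the pre-phase sum ($n\ge m$), the two rollouts now evolve under pre-branch dynamics/policy, but with mismatched initial distributions at time $m$ whose Wasserstein gap $\delta_m:=W(\mu_1^m,\mu_2^m)$ is already controlled by the post analysis (this is what the notation $\Delta_{m,post}$ is recording). The key step is to re-run the induction from the proof of Lemma \ref{lemma: distance_n_step}, but now with pre parameters and the non-trivial initial value $\delta_m$: the recursion $\delta_n\le\bar K\delta_{n-1}+\Delta_{pre}$ with $\Delta_{pre}:=\epsilon_m^{pre}+K_m\epsilon_\pi^{pre}$ unrolls for $n>m$ to $\delta_n\le\bar K^{n-m}\delta_m+\Delta_{pre}(1-\bar K^{n-m})/(1-\bar K)$. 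Lemma \ref{lemma:distance_joint} (with $\epsilon_\pi=\epsilon_\pi^{pre}$) then gives the joint bound, and after factoring out $\gamma^m$ the three infinite sums $\sum_{k\ge 0}(\gamma\bar K)^k=1/(1-\gamma\bar K)$, $\sum_{k\ge 0}\gamma^k(1-\bar K^k)/(1-\bar K)=\gamma/((1-\gamma)(1-\gamma\bar K))$, and $\sum_{k\ge 0}\gamma^k=1/(1-\gamma)$ produce exactly the three terms multiplying $\gamma^m K_r$ in the stated bound.

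The main obstacle is bookkeeping rather than conceptual: one must propagate $\delta_m$ cleanly across the phase boundary into the pre-phase recursion, and carry out the geometric-sum algebra using the identity $\frac{1}{1-\gamma}-\frac{1}{1-\gamma\bar K}=\frac{\gamma(1-\bar K)}{(1-\gamma)(1-\gamma\bar K)}$ to convert the $(1-\bar K^{n-m})$ factor into the compact $\gamma/((1-\gamma)(1-\gamma\bar K))$ shape after summation. All the lemmas needed (triangle inequality for $W$, the joint-distribution bound, and the state-distribution recursion) are already in place, so no new ingredients are required.
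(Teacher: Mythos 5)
Your proposal is correct and follows essentially the same route as the paper's proof: split the sum at the branch point $n=m$, bound the post-phase terms via Lemma \ref{lemma: distance_n_step} lifted to state-action pairs by Lemma \ref{lemma:distance_joint}, propagate the accumulated gap $\Delta_{m,post}$ into the recursion $\delta(n)\le\bar K\delta(n-1)+\Delta_{pre}$ for $n\ge m$, and evaluate the same geometric sums. Your handling of the initial condition at the phase boundary is, if anything, stated more cleanly than in the paper, which is slightly inconsistent about whether $\Delta_{m,post}$ denotes the state-level or state-action-level gap.
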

	\begin{proof}
		
		We want to bound  $W(p^t_{1,\pi_1}(s,a), p^t_{2,\pi_2}(s,a) )$. Depending on whether $t$ is larger than $m$ or not, we have two cases. 
		
		For $t\leq m$, using Lemma \ref{lemma: distance_n_step}, we have
		$$W(p_{1,\pi_1}^t(s),p^t_{2,\pi_2}(s)))\leq \Delta_{post} \frac{1-\bar{K}^t}{1-\bar{K}},$$
		where $\Delta_{post}= \epsilon_m^{post}+K_{m}\epsilon_{\pi}^{post}.$
		
		Then we use Lemma 1 and obtain
		
		$$W(p_{1,\pi_1}^t(s,a),p_{2,\pi_2}^t(s,a))\leq K_\pi\Delta_{post} \frac{1-\bar{K}^t}{1-\bar{K}}+\epsilon_\pi^{post} $$

		For $t> m$, we 
		denote $\Delta_{m,post}= W(p_{1,\pi_1}^m(s,a),p^m_{2,\pi_2}(s,a))\leq K_\pi\Delta_{post} \frac{1-\bar{K}^m}{1-\bar{K}}+\epsilon_\pi^{post}. $
		Using \eqref{equ:induction_step} in the proof of lemma \ref{lemma: distance_n_step}, we have
		$$W(p_{1,\pi_1}^t(s,a),p_{2,\pi_2}(s,a))\leq \bar{K}^{t-m} \Delta_{m,post}+\sum_{i=0}^{t-m-1}\bar{K}^i\Delta_{pre}= \bar{K}^{t-m}\Delta_{m,post}+\frac{1-\bar{K}^{t-m}}{1-\bar{K}}\Delta_{pre}.$$
		
		Using Lemma\ref{lemma:distance_joint} again, we have 
		
		$$W(p^t_{1,\pi_1}(s,a),p^t_{2,\pi_2}(s,a))\leq K_\pi\big(\bar{K}^{t-m}\Delta_{m,post}+\frac{1-\bar{K}^{t-m}}{1-\bar{K}}\Delta_{pre})+\epsilon_{\pi}^{pre} $$
		
		Now we are ready to bound the difference between $\eta_1$ and $\eta_2$. We first bound the term  from 0 to m-1.
		\begin{equation}
		\begin{split}
		&\sum_{n=0}^{m-1} \int \gamma^n (p^n_{1,\pi_1} (s,a)-p^n_{2,\pi_2}(s,a))r(s,a) dsda\\
		=&K_r\sum_{n=0}^{m-1}\gamma^n \int (p^n_{1,\pi_1} (s,a)-p^n_{2,\pi_2}(s,a))\frac{r(s,a)}{K_r} dsda \\
		\leq&K_r \sum_{n=0}^{m-1}\gamma^n W(p^n_{1,\pi_1}(s,a),p^n_{2,\pi_2}(s,a))\\
		\overset{(a)}{\leq} &K_r \sum_{n=0}^{m-1} \gamma^n (K_\pi \Delta_{post}\frac{1-\bar{K}^n}{1-\bar{K}}+\epsilon_{\pi,post})\\
		=& K_r\frac{K_\pi \Delta_{post}(1-\gamma^m)}{(1-\bar{K})(1-\gamma)}-K_r\frac{K_\pi \Delta_{post}(1-(\gamma \bar{K})^m) }{(1-\gamma \bar{K})(1-\bar{K})}+K_r\frac{1-\gamma^m}{1-\gamma}\epsilon_{\pi,post} 
		\end{split}
		\end{equation}
		where (a) uses Lemma \ref{lemma: distance_n_step} and Lemma \ref{lemma:distance_joint}.
		
		Then we bound the term from $m$ to infinity.
		
		Denote the error term caused by step $0$ to $m$ as $\Delta_{m,post}$ and
		$\Delta_{m,post}=\Delta_{post} \frac{1-\bar{K}^m}{1-\bar{K}}$. Following similar step as that in $t<m$, we obtain.
		\begin{equation}
		\begin{split}
		&\sum_{n=m}^{\infty} \int \gamma^n (p^n_{1,\pi_1} (s,a)-p^n_{2,\pi_2}(s,a))r(s,a) dsda\\   
		\leq &K_r\sum_{n=m}^{\infty}\gamma^n(K_\pi \bar{K}^{n-m}\Delta_ {m,post}+ K_\pi \frac{1-\bar{K}^{n-m}}{1-\bar{K}}\Delta_{pre} +\epsilon_{\pi,pre})\\
		=&\gamma^m[K_\pi\Delta_{m,post}\frac{1}{1-\gamma\bar{K}}+K_\pi \Delta_{pre} \frac{\gamma}{(1-\gamma)(1-\bar{K}\gamma)}+\frac{\epsilon_{\pi,pre}}{1-\gamma} ]
		\end{split}
		\end{equation}
	\end{proof}

	\begin{proof}[Proof of theorem \ref{theorem:general_result}.]
		
		Let $\pi_D$ be the data collecting policy.  $ \eta(\pi)-\hat{\eta}(\pi)=\eta(\pi)-\eta(\pi_D) + \eta(\pi_D)-\hat{\eta}(\pi).$
		
		Use Lemma \ref{lemma:long_term_general} by setting $\epsilon_m=0$ we have 
		$$|\eta(\pi)-\eta(\pi_D)|\leq \frac{\gamma K_r \bar{K}\epsilon_\pi}{(1-\gamma)(1-\gamma\bar{K})}+\frac{1}{1-\gamma}K_r\epsilon_\pi.$$

		Use Lemma \ref{lemma:long_term_general} again, we have
		
		$$|\eta(\pi_D)-\hat{\eta}(\pi)|\leq \frac{\gamma K_r \bar{K}\epsilon_\pi}{(1-\gamma)(1-\gamma\bar{K})}+\frac{1}{1-\gamma}K_r\epsilon_\pi+\frac{\gamma K_rK_\pi\epsilon_m}{(1-\gamma)(1-\gamma\bar{K})}.$$
		Combine above two results, we have the theorem.
	\end{proof}

	\begin{proof}[Proof of theorem \ref{theorem:branched_result}]
		We define a reference process, where it executes the policy $\pi_D$ under the true dynamics until the branch point and then executes the new policy $\pi$ with true transition model. We denote the return under this scheme as $\eta^{\pi_D,\pi}$. Then we have 
		
		$|\eta(\pi)-\eta^{branch}(\pi)|\leq |\eta(\pi)-\eta^{\pi_D,\pi}|+|\eta^{\pi_D,\pi}-\eta^{branch}(\pi)|$.
		Now we bound two terms on the right hand side respectively.
		
		For the first term $\eta(\pi)-\eta^{\pi_D,\pi} $, the error just come from $\epsilon_\pi^{pre}.$ Plug $\epsilon_\pi^{pre}=\epsilon_\pi$ into lemma \ref{lemma:branch_m} and set all the other error to zeros and we have
		
		$$|\eta(\pi)-\eta^{\pi_D,\pi}|\leq [\frac{\gamma^{k+1} \bar{K}}{(1-\gamma)(1-\bar{K}\gamma)}+\frac{\gamma^k}{1-\gamma}]\epsilon_\pi. $$
		
		For the second term $\eta^{\pi_D,\pi}-\eta^{branch}(\pi)$, the error comes from $\epsilon_m^{post}$. Plug $\epsilon_m^{post}=\epsilon_k$ into Lemma \ref{lemma:branch_m}, we obtain
		$|\eta^{\pi_D,\pi}-\eta^{branch}(\pi)|\leq \frac{K_rK_\pi\epsilon_m(1-\gamma^k)}{(1-\bar{K})(1-\gamma)}-\frac{K_rK_\pi\epsilon_m(1-(\gamma \bar{K})^k)}{(1-\gamma\bar{K})(1-\bar{K})}+\gamma^k \frac{K_rK_\pi\epsilon_m(1-\bar{K}^k)}{ (1-\bar{K})(1-\gamma)}.$
		
		Combine all pieces together, we have the result.

	\end{proof}

	\section{Derivation}\label{appendix:derivation}
	
	We start the derivation with minimization of the KL divergence $KL(\tilde{p}(\tau)||p(\tau))$, where  $ p(\tau)=[p(s_0)\prod_{t=0}^{T}\hat{p}(s_{t+1}|s_t,a_t)]\exp\big(\sum_{t=0}^T \hat{r}(s_t,a_t)\big)$, 
	$\tilde{p}(\tau)=p(s_0) \prod_{t=1}^{T}\hat{p}(s_{t+1}|s_t, a_t)\pi(a_t|s_t).$
	
	\begin{equation}
	\begin{split}
	KL(\tilde{p}(\tau)||p(\tau))=&\mathbb{E}_{\tau\sim \tilde{p}(\tau)} \sum_{t=1}^{T}\big( \hat{r}(s_t,a_t)-\log \pi(a_t|s_t)  \big) \\  
	=&\sum_{\tau} p(s_0) \prod_{t=0}^{T}\hat{p}(s_{t+1}|s_t, a_t)\pi(a_t|s_t)\sum_{t=0}^{T}\big(  \hat{r}(s_t,a_t)-\log \pi(a_t|s_t)  \big)
	\end{split}
	\end{equation}

	Now we optimize KL divergence w.r.t $\pi(\cdot|s_t)$. 
	Considering the constraint $\sum_j \pi(j|s_t)=1 $, we introduce a Lagrangian multiplier $\lambda(\sum_{j=1}^{|\mathcal{A}|}\pi(j|s_t)-1) $ (Rigorously speaking, we need to consider another constraint that each element of $\pi$ is larger than 0, but later we will see the optimal value satisfies this constraint automatically).
	Now we take gradient of $KL(\tilde{p}(\tau)|| p(\tau))+\lambda(\sum_{j=1}^{|\mathcal{A}|} \pi(j|s_t)-1)$ w.r.t $\pi(\cdot|s)$, and obtain 
	
	$$\log \pi^*(a_t|s_t)= \mathbb{E}_{\hat{p}(s_{t+1:T}, a_{t+1:T}|s_t,a_t)}[ 
	\sum_{t'=t}^{T} \hat{r}(s_{t'},a_{t'})-\sum_{t'=t+1}^T \log \pi(a_{t'}|s_{t'}) ] -1+\lambda.$$
	
	Therefore 
	
	$$ \pi^*(a_t|s_t)\propto \exp \big(\mathbb{E}_{\hat{p}(s_{t+1:T}, a_{t+1:T}|s_t,a_t)}[ 
	\sum_{t'=t}^{T}\hat{r}(s_{t'},a_{t'})-\sum_{t'=t+1}^T \log \pi(a_{t'}|s_{t'}) ]  \big).  $$

	Since we know $\sum_{j}\pi(j|s^t)=1$, thus we have 
	
	$$ \pi^*(a_t|s_t)=\frac{1}{Z}\exp \big(\mathbb{E}_{\hat{p}(s_{t+1:T}, a_{t+1:T}|s_t,a_t)}[ 
	\sum_{t'=t}^{T} \hat{r}(s_{t'},a_{t'})-\sum_{t'=t+1}^T \log \pi(a_{t'}|s_{t'}) ]  \big).   $$

	For convenience, we define the soft $V$ function and $Q$ function as that in \citep{levine2018reinforcement}.

	\begin{equation}\label{equ:definition_VQ}
	\begin{split}
	V(s_{t+1}):=\hat{\mathbb{E}} \big[  \sum_{t'=t+1}^T \hat{r}(s_{t'},a_{t'}) -\log \pi(a_t|s_t)|s_{t+1}\big],\\
	Q(s_t,a_t)
	:= \hat{r}(s_t,a_t) + \mathbb{E}_{\hat{p}(s_{t+1}|s_t,a_t})[V(s_{t+1})].  
	\end{split} 
	\end{equation}
	
	Notice it is easy to incorporate the discount factor by defining a absorbing state where each transition have $(1-\gamma)$ probability to go to that state.
	Thus we have
	
	\begin{equation}
	Q(s_t,a_t) = \hat{r}(s_t,a_t) + \gamma \mathbb{E}_{s_{t+1}\sim \hat{p}(s_{t+1}|s_t,a_t)}[V(s_{t+1})],
	\end{equation}  
	\begin{equation}
	V(s_t)=\mathbb{E}_{\pi(a_t|s_t)}[Q(s_t,a_t)-\log\pi(a_t|s_t)]. 
	\end{equation}

\end{document}